\newcommand{\D}{\mathrm{d}}
\newcommand{\I}{\mathrm{i}}
\newcommand{\E}{\mathrm{e}}
\newcommand{\cc}{\mathrm{c.c.}}
\newcommand{\R}{\mathbb{R}}
\newcommand*\diff{\mathop{}\!\D}
\newcommand{\abs}[1]{\lvert#1\rvert} 
\newcommand{\Abs}[1]{\left\lvert#1\right\rvert} 
\newcommand{\norm}[1]{\lVert#1\rVert}
\newcommand{\Norm}[1]{\left\lVert#1\right\rVert}
\newtheorem{thm}{Theorem}
\newtheorem{cor}{Corollary}
\newtheorem{defi}{Definition}
\newtheorem{lem}{Lemma}
\newtheorem{rmk}{Remark}
\newtheorem{assump}{Assumption}
\newtheorem{exam}{Example}
\newcommand*\patchAmsMathEnvironmentForLineno[1]{%
  \expandafter\let\csname old#1\expandafter\endcsname\csname #1\endcsname
  \expandafter\let\csname oldend#1\expandafter\endcsname\csname end#1\endcsname
  \renewenvironment{#1}%
     {\linenomath\csname old#1\endcsname}%
     {\csname oldend#1\endcsname\endlinenomath}}%
\newcommand*\patchBothAmsMathEnvironmentsForLineno[1]{%
  \patchAmsMathEnvironmentForLineno{#1}%
  \patchAmsMathEnvironmentForLineno{#1*}}%
\title{Theory of the Frequency Principle for General Deep Neural Networks}
\author{
Tao Luo\\
luo196@purdue.edu\\
Department of Mathematics, Purdue University\\
\\
Zheng Ma\\
ma531@purdue.edu\\
Department of Mathematics, Purdue University\\
\\
Zhi-Qin John Xu\\
zhiqinxu@nyu.edu\\
New York University Abu Dhabi and Courant Institute \\
\\
Yaoyu Zhang \\
yaoyu@cims.nyu.edu\\
New York University Abu Dhabi  and Courant Institute\\
}
\begin{document}

\maketitle
\allowdisplaybreaks

\begin{abstract}
  Along with fruitful applications of Deep Neural Networks (DNNs) to realistic problems, recently, some empirical studies of DNNs reported a universal phenomenon of Frequency Principle (F-Principle): a DNN tends to learn a target function from low to high frequencies during the training. The F-Principle has been very useful in providing both qualitative and quantitative understandings of DNNs. In this paper, we rigorously investigate the F-Principle for the training dynamics of a general DNN at three stages: initial stage, intermediate stage, and final stage. For each stage, a theorem is provided in terms of proper quantities characterizing the F-Principle. Our results are general in the sense that they work for multilayer networks with general activation functions, population densities of data, and a large class of loss functions. Our work lays a theoretical foundation of the F-Principle for a better understanding of the training process of DNNs. 
\end{abstract}

%


\section{Introduction}

Deep learning has achieved great success as in many fields \citep{lecun2015deep},
e.g., speech recognition \citep{amodei2016deep}, object recognition
\citep{eitel2015multimodal}, natural language processing \citep{young2018recent}
and computer game control \citep{mnih2015human}. It has also been
adopted into algorithms to solve scientific computing problems \citep{weinan2017deep,khoo2017solving,he2018relu,fan2018multiscale}.
In principle, the universal approximation theorem states that a commonly-used
Deep Neural Network (DNN) of sufficiently large width can approximate
any function to a desired precision \citep{cybenko1989approximation}.
However, it remains a mystery that how a DNN finds a minimum corresponding to such an approximation through
the gradient-based training process.
To understand the learning behavior of DNNs for the approximation problem, recent works model the gradient flow
of parameters in a two-layer ReLU neural networks by a
partial differential equation (PDE) in the mean-field limit \citep{rotskoff2018parameters,mei2018mean,sirignano2018mean}.
However, it is not clear whether this PDE approach, which describes a neural network of one hidden layer of infinite width, can be extended
to general DNNs of multiple hidden layers and limited neuron number.

In this work, we take another approach that uses Fourier analysis to study the learning behavior of DNNs based on the phenomenon of \emph{Frequency Principle (F-Principle), i.e., a
DNN tends to learn a target function from low to high frequencies
during the training }\citep{xu_training_2018,rahaman2018spectral,xu2018understanding,xu2018frequency,xu2019frequency,zhang_explicitizing_2019}.
Empirically,
the F-Principle can be widely observed in
general DNNs for both benchmark and synthetic data \citep{xu_training_2018,xu2019frequency}.
Conceptually, it provides a qualitative explanation of the success
and failure of DNNs \citep{xu2019frequency}. Based on the F-Principle, a series of works has been done. For example, it is  used as an important phenomenon to pursue
fundamentally different learning trajectories of meta-learning \citep{rabinowitz2019meta}.  It is also used as a tool to observe the performance of adaptive activation function \citep{jagtap2019adaptive}.
Based on the F-Principle, a numerical algorithm is developed to accelerate the DNN fitting of high frequency functions by shifting high frequencies
to lower ones \citep{cai2019phasednn}.
Theoretically, an effective model of linear F-Principle
dynamics \citep{zhang_explicitizing_2019}, which accurately predicts the
learning results of two-layer ReLU neural networks of large widths, leads to an apriori estimate of the generalization bound. In addition, a theorem is provided for the characterization of the initial
training stage of a two-layer $\tanh$ network \citep{xu2019frequency}. The same theoretical analysis in \cite{xu2019frequency}
is also adopted in the analysis of DNNs with ReLU activation function
\citep{rahaman2018spectral} and a nonlinear collaborative scheme of
loss functions for DNN training \citep{zhen2018nonlinear}. These subsequent
works show the importance of the F-Principle. However,
a theory of the F-Principle for general
DNNs is still missing. 

Following the same direction as in \cite{xu2019frequency}, in this work, we
propose a theoretical framework of Fourier analysis for the study of 
the training behavior of \emph{general} DNNs in the following three stages:
the initial stage, the intermediate stage, and the final stage. 
At all stages, we rigorously characterize the F-Principle by estimating some proper quantities.
At the initial
and final stages with the MSE loss (mean-squared error, also known as $L^{2}$ loss), we show that
the change of MSE is dominated by low frequencies.
Furthermore, in these two stages with general $L^{p}$ ($2\leq p<\infty$) loss, we show that the change of the DNN
output is dominated by the low-frequency part. A key contribution
of this work is on the intermediate stage --- with $L^{p}$ loss, the difference of the MSE over a certain period, in which the MSE is reduced by half, is dominated by
the low frequencies. In summary, we verify that the F-Principle
is universal in the sense that our results not only work for DNNs
of multiple layers with any commonly-used activation function, e.g., ReLU, sigmoid,
and tanh, but also work for a general population density of data and for a
general class of loss functions. The key insight unraveled by our analysis is that the regularity of DNN converts into the decay
rate of a loss function in the frequency domain. 

\section{Preliminaries}
We start with a brief introduction to DNNs and its training dynamics. Under very mild assumptions, we provide some regularity results which are crucial to the proof of the main theorems summarized in the next section. 

\subsection{Deep Neural Networks}
Consider a DNN with $(H-1)$-hidden layers and general activation functions. We regard the $d$-dimensional input as the $0$-th layer and the one-dimensional output as the $H$-th layer. Let $n_l$ be the number of neurons in the $l$-th layer. In particular, $n_0=d$ and $n_H=1$.

The {\em hypothesis space} $\mathcal{H}$ is a family of \emph{hypothesis functions} parametrized by the \emph{parameter vector} $\theta\in\R^N$ whose entries are called \emph{parameters} $W^{(l)}_i$'s (also known as \emph{weight}) and $b^{(l)}_i$'s (also known as \emph{bias}). More precisely, we set
\begin{align}
  \theta=\Big(W^{(l)},b^{(l)}\Big)_{l=1}^{H},
\end{align}
where for $l=1,\cdots,H$,
\begin{align}
  W^{(l)}
  &= \Big(W^{(l)}_i\Big)_{i=1}^{n_l},\quad W^{(l)}_i\in\R^{n_{l-1}}\\
  b^{(l)}
  &= \Big(b^{(l)}_i\Big)_{i=1}^{n_l},\quad b^{(l)}\in\R.
\end{align}
The \emph{size} $N$ of the network is the number of the parameters, i.e.,
\begin{equation}
  N=\sum_{l=0}^{H-1}(n_l+1) n_{l+1}.\label{eq..size}
\end{equation}

To define the hypothesis functions in $\mathcal{H}$, we need some nonlinear functions which are known as \emph{activation functions}:
\begin{equation}
  \sigma_i^{(l)}:\R\to\R,\quad l=1,\cdots,H-1,\quad i=1,\cdots,n_l.
\end{equation}
Given $\theta\in \R^N$, the corresponding function $h$ in $\mathcal{H}$ is defined by a series of function compositions. First, we set $h^{(0)}=\mathrm{id}:\R^{d}\to\R^{d}$, i.e., $h^{(0)}(x)=x$ for all $x\in\R^d$. Then for $l=1,\cdots,H-1$, $h^{(l)}$ is defined recursively as
\begin{align}
  &h^{(l)}:\R^{d}\to\R^{n_l},\\
  &(h^{(l)}(x))_i=\sigma_{i}^{(l)} (W_i^{(l)}\cdot h^{(l-1)}(x)+b_i^{(l)}),\quad i=1,\cdots,n_l.
\end{align}
Finally, we denote 
\begin{equation}
  h^{(H)}(x)=W^{(H)}\cdot h^{(H-1)}(x)+b^{(H)}.
\end{equation}
We remark that for the most applications, the activation functions $\sigma_i^{(l)}$ are chosen to be the same, i.e., $\sigma_i^{(l)}=\sigma$, $l=1,\cdots,H-1$, $i=1,\cdots,n_l$.

\begin{exam}
  For instance, if a one-hidden layer neural network is used, then $H=2$ and the hypothesis function can be written into the following form:
  \begin{equation}
  h^{(2)}(x,\theta)=\sum_{i=1}^{n}w_{i}^{(2)}\sigma(w_{i}^{(1)}\cdot x+b_{i}^{(1)}),\quad w_{i}^{(2)},b_{i}^{(1)}\in{\rm \R}, w_{i}^{(1)}\in\R^d.
  \end{equation}
  Thus the size of the network $N=(d+2)n$ which is consistent with \eqref{eq..size}.
\end{exam}

We are only interested in the target function $f_\mathrm{target}$ in a compact domain $\Omega$, i.e., $\Omega\subset\subset\R^d$. A bump function $\chi$ is used to truncate both hypothesis and target functions:
\begin{align}
    h(x,\theta)&= h^{(H)}(x,\theta)\chi(x),\\
    f(x)&= f_\mathrm{target}(x)\chi(x).
\end{align}
In the sequel, we will also refer to $h$ and $f$ as the hypothesis and target functions, respectively. 

\subsection{Loss Function and Training Dynamics}
In this work, we investigate the training dynamics of parameters in DNNs with two cases of loss functions:

(i) The MSE loss function with population measure $\mu$, i.e., 
\begin{equation}
  L_\rho(\theta)=\int_{\R^d}\abs{h(x,\theta)-f(x)}^2\diff{\mu}.
\end{equation}
In this case, the training dynamics of $\theta$ follows the gradient flow:
\begin{equation}
  \left\{
  \begin{array}{l}
  \dfrac{\D \theta}{\D t}=-\nabla_\theta L_\rho\\
  \theta(0)=\theta_0.
  \end{array}
  \right.\label{eq..TrainingL2Loss}
\end{equation}

(ii) A general loss function with population measure $\mu$, i.e., 
\begin{equation}
    \tilde{L}_\rho(\theta)=\int_{\R^d}\ell(h(x,\theta)-f(x))\diff{\mu},
\end{equation}
where the function $\ell$ satisfies some mild assumptions to be explained later.
In this case, the training dynamics of $\theta$ becomes:
\begin{equation}
  \left\{
  \begin{array}{l}
  \dfrac{\D \theta}{\D t}=-\nabla_\theta \tilde{L}_\rho\\
  \theta(0)=\theta_0.
  \end{array}
  \right.\label{eq..TrainingGeneralLoss}
\end{equation}

In the case of MSE loss function, we have
\begin{align}
  L_\rho
  &= \int_{\R^d}\abs{h_\rho(x)-f_\rho(x)}^2\diff{x}\\
  &= \int_{\R^d}\abs{\hat{h}_\rho(\xi)-\hat{f}_\rho(\xi)}^2\diff{\xi},
\end{align}
where $\rho$, satisfying $\D \mu=\rho\D x$, is called the {\em population density} and
\begin{equation}
  h_\rho=h\sqrt{\rho},\quad f_\rho=f\sqrt{\rho}.
\end{equation}
The second equality is due to the Plancherel theorem. Here and in the sequel, we use the following conventions for the Fourier transform and its inverse transform on $\R^d$:
\begin{equation*}
  \mathcal{F}[g](\xi)=\hat{g}(\xi)=\int_{\R^d}g(x)\E^{-2\pi\I \xi\cdot x}\diff{x},\quad
  g(x)=\int_{\R^d}\hat{g}(\xi)\E^{2\pi\I \xi\cdot x}\diff{\xi}.
\end{equation*}
For the convenience of proofs, we denote
\begin{align}
  L_\rho(\theta)
  &= \int_{\R^d}q_\rho(\xi,\theta)\diff{\xi},\\
  q_\rho(\xi,\theta)
  &= \abs{\hat{h}_\rho(\xi,\theta)-\hat{f}_\rho(\xi)}^2.
\end{align}

\subsection{Assumptions}

The requirements on $\chi$, $f$, $\sigma$, and $\mu$ are summarized here.
\begin{assump}[regularity]\label{assump..Differentiability}
  The bump function $\chi$ satisfies $\chi(x)=1$, $x\in\Omega$ and $\chi(x)=0$, $x\in\R^d\backslash\Omega'$ for domains $\Omega$ and $\Omega'$ with $\Omega\subset\subset\Omega'\subset\subset\R^d$. There is a positive integer $k$ (can be $\infty$) such that $f_\mathrm{target}\in W^{k,\infty}_\mathrm{loc}(\R^d;\R)$,  $\chi\in W^{k,\infty}_\mathrm{loc}(\R^d;[0,+\infty))$
  , and $\sigma_i^{(l)}\in W^{k,\infty}_\mathrm{loc}(\R;\R)$ for $l=1,\cdots,H-1$, $i=1,\cdots,n_l$. 
\end{assump}
\begin{assump}[bounded population density]\label{assump..DensityBounded}
  There exists a function $\rho\in L^{\infty}(\R^d;[0,+\infty))$ satisfying $\D \mu=\rho\diff{x}$.
\end{assump}

\begin{exam}
  Here we list some commonly-used activation functions:\\
  (1) ReLU (Rectified Linear Unit): $\mathrm{ReLU}(x)=\max(0,x)$, $x\in\R$;\\
  (2) tanh (hyperbolic tangent): $\tanh(x)=\frac{\E^x-\E^{-x}}{\E^x+\E^{-x}}$, $x\in\R$;\\
  (3) sigmoid function (also known as logistic function): $S(x)=\frac{1}{1+\E^{-x}}$, $x\in\R$.
\end{exam}

\begin{rmk}
  It is also allowed that $k=\infty$ where the functions $f$ and $\sigma_i^{(l)}$ are all $C^\infty$ by Sobolev embedding inequalities. This case includes $\tanh$ and $\mathrm{sigmoid}$ activation functions.
\end{rmk}
\begin{rmk}
  If an activation function is ReLU, then $k=1$.
\end{rmk}
\begin{rmk}
  For $x\in\Omega$, we have $h(x,\theta)-f(x)=h^{(H)}(x,\theta)-f_\mathrm{target}(x)$.
\end{rmk}

For the training dynamics \eqref{eq..TrainingL2Loss} or \eqref{eq..TrainingGeneralLoss}, we suppose the parameters are bounded.
\begin{assump}[bounded trajectory]\label{assump..BoundedTrajectory}
  The training dynamics is nontrivial, i.e., $\theta(t)\not\equiv\mathrm{const}$. There exists a constant $R>0$ such that $\sup_{t\geq 0}\abs{\theta(t)}\leq R$ where the parameter vector $\theta(t)$ is the solution to \eqref{eq..TrainingL2Loss} or \eqref{eq..TrainingGeneralLoss}. 
\end{assump}
\begin{rmk}
  The bound $R$ depends on initial parameter $\theta_0$.
\end{rmk}
In the case of MSE loss function, we will further take the following assumption.
\begin{assump}\label{assump..LeastSquareLoss}
    The density $\rho$ satisfies $\sqrt{\rho}\in W^{k,\infty}_\mathrm{loc}(\R^d;[0,+\infty))$. 
\end{assump}
The general loss function considered in this work satisfies the following assumption.
\begin{assump}[general loss function]\label{assump..GeneralLoss}
  The function $\ell$ in the general loss function $\tilde{L}_\rho(\theta)$ satisfies $\ell\in C^{2}(\R;[0,+\infty))$ and there exist positive constants $C$ and $r_0$ such that $C^{-1}[\ell'(z)]^2\leq \ell(z)\leq C \abs{z}^2$ for $\abs{z}\leq r_0$.
\end{assump}
\begin{exam}
  The $L^p$ ($2\leq p<\infty$) loss function satisfies Assumption \ref{assump..GeneralLoss}. Here the $L^p$ ($1\leq p<\infty$) loss functions used in machine learning are defined as $L_\rho(\theta)=\int_{\R^d}\abs{h(x,\theta)-f(x)}^p\rho(x)\diff{x}$ which is a little bit different from the $L^p$ norm used in mathematics. 
\end{exam}

\subsection{Regularity}

We begin with the integrability of the hypothesis function. To achieve this, we use the ``Japanese bracket'' of $\xi$:
\begin{equation}
  \langle\xi\rangle:=(1+\abs{\xi}^2)^{1/2}.
\end{equation}
\begin{lem}\label{lem..LinfinityIntegrability}
  Suppose that the Assumption \ref{assump..Differentiability} holds. Given any $\theta\in\R^N$, the hypothesis function $h\in W^{k,2}(\R^d;\R)$ and its gradient with respect to the parameters $\nabla_\theta h\in W^{k-1,2}(\R^d;\R^N)$. Also, we have $f\in W^{k,2}(\R^d;\R)$.
\end{lem}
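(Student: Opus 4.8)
The plan is to show that the full network map $h^{(H)}(x,\theta)$ inherits, \emph{as a function of the joint variable} $(x,\theta)\in\R^{d+N}$, the same local Sobolev regularity $W^{k,\infty}_\mathrm{loc}$ that the activation functions possess, and then to multiply by $\chi$ in order to upgrade ``local'' and ``$L^\infty$'' to ``global'' and ``$L^2$''. The main technical ingredient, which I would isolate as a preliminary observation, is that $W^{k,\infty}_\mathrm{loc}(\R^m)$ behaves like a classical function space: by Morrey's embedding it coincides with $C^{k-1,1}_\mathrm{loc}(\R^m)$, the space of $C^{k-1}$ functions whose derivatives of order $k-1$ are locally Lipschitz. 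In particular (i) it is a locally bounded algebra (Leibniz rule), and (ii) it is stable under composition: if $g\in W^{k,\infty}_\mathrm{loc}(\R^m;\R)$ and $\phi=(\phi_1,\dots,\phi_m)$ with each $\phi_j\in W^{k,\infty}_\mathrm{loc}(\R^d;\R)$, then $g\circ\phi\in W^{k,\infty}_\mathrm{loc}(\R^d;\R)$. Claim (ii) follows from Fa\`a di Bruno's formula applied to $C^{k-1}$ maps: the derivatives of $g\circ\phi$ up to order $k-1$ are finite sums of products of $(D^j g)\circ\phi$ with $j\le k-1$ and derivatives of the $\phi_i$ of order $\le k-1$, each factor being locally bounded and locally Lipschitz, hence so is every such product; working with classical derivatives of order $\le k-1$ sidesteps the measure-theoretic subtlety that top-order \emph{weak} derivatives need not compose. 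For $k=\infty$ this is just ``a composition of $C^\infty$ maps is $C^\infty$''; for ReLU, $k=1$ and (ii) is ``a Lipschitz function of a Lipschitz function is Lipschitz.''

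Next I would run an induction over the layers $l=0,1,\dots,H$ to conclude $h^{(H)}\in W^{k,\infty}_\mathrm{loc}(\R^{d+N};\R)$. The base case $h^{(0)}(x,\theta)=x$ is smooth. For the inductive step, each preactivation $z^{(l)}_i(x,\theta)=W^{(l)}_i\cdot h^{(l-1)}(x,\theta)+b^{(l)}_i$ is a polynomial in the coordinates of $\theta$ whose coefficients are the components of $h^{(l-1)}$, hence lies in $W^{k,\infty}_\mathrm{loc}(\R^{d+N})$ by the algebra property (i); then $h^{(l)}_i=\sigma^{(l)}_i\circ z^{(l)}_i\in W^{k,\infty}_\mathrm{loc}(\R^{d+N})$ by the composition property (ii) together with Assumption \ref{assump..Differentiability}. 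The output layer $h^{(H)}=W^{(H)}\cdot h^{(H-1)}+b^{(H)}$ is handled as one more preactivation step.

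Finally I would introduce the cutoff. Since $\chi\in W^{k,\infty}_\mathrm{loc}(\R^d)$ is supported in the compact set $\overline{\Omega'}$, it belongs to $W^{k,\infty}(\R^d)$, and, viewed as a function of $(x,\theta)$ (constant in $\theta$), it is still in $W^{k,\infty}_\mathrm{loc}(\R^{d+N})$; hence $h(x,\theta)=\chi(x)h^{(H)}(x,\theta)\in W^{k,\infty}_\mathrm{loc}(\R^{d+N})$ and, for each fixed $\theta$, $x\mapsto h(x,\theta)$ is supported in $\overline{\Omega'}$. Restriction to a fixed $\theta$ preserves the class $C^{k-1,1}_\mathrm{loc}=W^{k,\infty}_\mathrm{loc}$, so $h(\cdot,\theta)\in W^{k,\infty}(\R^d;\R)$, and because $\overline{\Omega'}$ has finite Lebesgue measure, $W^{k,\infty}(\R^d)\hookrightarrow W^{k,2}(\R^d)$, giving $h(\cdot,\theta)\in W^{k,2}(\R^d;\R)$. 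For the parameter gradient, $h^{(H)}\in W^{k,\infty}_\mathrm{loc}(\R^{d+N})$ yields $\nabla_\theta h^{(H)}\in W^{k-1,\infty}_\mathrm{loc}(\R^{d+N};\R^N)$; since $\chi$ is independent of $\theta$ we have $\nabla_\theta h=\chi\,\nabla_\theta h^{(H)}$, and multiplying by the compactly supported $\chi\in W^{k-1,\infty}_\mathrm{loc}$ and restricting to a fixed $\theta$ gives, as before, $\nabla_\theta h(\cdot,\theta)\in W^{k-1,\infty}(\R^d;\R^N)\subset W^{k-1,2}(\R^d;\R^N)$. The statement for $f=\chi f_\mathrm{target}$ is the same product-and-cutoff argument with $f_\mathrm{target}\in W^{k,\infty}_\mathrm{loc}$ in place of $h^{(H)}$. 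The only genuine obstacle here is the composition step for $W^{k,\infty}_\mathrm{loc}$ spaces (claim (ii) above); everything else is bookkeeping with products, cutoffs, and the trivial embedding $W^{k,\infty}\hookrightarrow W^{k,2}$ over sets of finite measure.
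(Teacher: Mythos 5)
Your proof is correct and rests on the same core ideas as the paper's: a layer-wise induction using stability of $W^{k,\infty}_{\mathrm{loc}}$ under composition (via Fa\`a di Bruno and the identification $W^{k,\infty}_{\mathrm{loc}}\cong C^{k-1,1}_{\mathrm{loc}}$), followed by multiplication with the compactly supported cutoff $\chi$ and the trivial embedding $W^{k,\infty}(\mathrm{supp}\,\chi)\hookrightarrow W^{k,2}$. The main difference is organizational. The paper fixes $\theta\in\R^N$ once and for all and runs the induction purely in $x$, establishing $h^{(H)}(\cdot,\theta)\in W^{k,\infty}_{\mathrm{loc}}(\R^d)$, and then states that the analogous induction (now propagating $(\sigma_i^{(l)})'\in W^{k-1,\infty}_{\mathrm{loc}}$ through the chain-rule formula) handles $\nabla_\theta h(\cdot,\theta)$. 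You instead establish joint regularity $h^{(H)}\in W^{k,\infty}_{\mathrm{loc}}(\R^{d+N})$ and derive both the $x$-regularity of $h$ and the regularity of $\nabla_\theta h$ by differentiation and restriction to a $\theta$-slice. That is slightly more information and arguably cleaner, but it introduces one delicacy you should make explicit: for $k=1$ (ReLU) you get $\nabla_\theta h^{(H)}\in W^{0,\infty}_{\mathrm{loc}}(\R^{d+N})=L^\infty_{\mathrm{loc}}(\R^{d+N})$, and an $L^\infty_{\mathrm{loc}}$ equivalence class does not restrict canonically to the measure-zero slice $\{\theta=\theta_0\}$; to recover $\nabla_\theta h(\cdot,\theta_0)\in L^\infty(\R^d)$ one should fall back on the concrete chain-rule representative (as the paper implicitly does) rather than on abstract restriction of $C^{k-1,1}_{\mathrm{loc}}$, since that identification only gives genuine pointwise functions for $k\ge 2$. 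With that caveat noted, your argument is sound and interchangeable with the paper's.
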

\begin{proof}
  Recall that $f(x)=f_\mathrm{target}(x)\chi(x)$ and $h(x)=h^{(H)}(x)\chi(x)$ given $\theta\in\R^N$. By Assumption \ref{assump..Differentiability}, $f_\mathrm{target}\in W^{k,\infty}_\mathrm{loc}(\R^d;\R)$ and $\chi(x)\in W^{k,\infty}_\mathrm{loc}(\R^d;[0,+\infty))$ with a compact support. Thus $f\in W^{k,2}(\R^d;\R)$. In order to show $h\in W^{k,2}(\R^d;\R)$, it is sufficient to prove that
  $h^{(H)}\in W^{k,\infty}_\mathrm{loc}(\R^d;\R)$. Indeed, we prove $h^{(l)}\in W^{k,\infty}_\mathrm{loc}(\R^d;\R^{n_l})$ for $l=0,1,\cdots,H$ by induction. For $l=0$, $h^{(0)}\in W^{k,\infty}_\mathrm{loc}(\R^d;\R^{n_0})$ because $h^{(0)}(x)=x$ and $n_0=d$. Suppose that for $l$ ($0\leq l\leq H-2$) we have $h^{(l)}\in W^{k,\infty}_\mathrm{loc}(\R^d;\R^{n_l})$. Now let us consider $h^{(l+1)}$ with $(h^{(l+1)}(x))_i=\sigma_{i}^{(l+1)} (W_i^{(l+1)}\cdot h^{(l)}(x)+b_i^{(l+1)})$, $i=1,\cdots,n_{l+1}$. By the induction assumption, we have $W_i^{(l+1)}\cdot h^{(l)}+b_i^{(l+1)}\in W^{k,\infty}_\mathrm{loc}(\R^d;\R)$. 
  By Assumption \ref{assump..Differentiability}, $\sigma_i^{(l)}\in W^{k,\infty}_\mathrm{loc}(\R;\R)$. Note that $\sigma^{(l)}_{i}\in C^{k-1}(\R;\R)$ by Sobolev embedding. Then $(h^{(l+1)})_i\in W^{k,\infty}_\mathrm{loc}(\R^d;\R)$ because of the chain rule and the fact that the composition of continuous functions is still continuous. Finally, for $l=H-1$, we have $h^{(H)}=W^{(H)}\cdot h^{(H-1)}+b^{(H)}\in W^{k,\infty}_\mathrm{loc}(\R^d;\R)$.

  The proof for $\nabla_\theta h$ is similar if we note that $(\sigma_i^{(l)})' \in W^{k-1,\infty}_\mathrm{loc}(\R;\R)$.
\end{proof}

\begin{rmk}
  The continuity of $\sigma_i^{(l)}$ is neccesary because the composition of two Lebesgue measurable functions need not be Lebesgue measurable.
\end{rmk}

\begin{lem}\label{lem..L2Integrability}
    Suppose that the Assumptions \ref{assump..Differentiability} and \ref{assump..DensityBounded} hold. Then
    
    \noindent
  (a). For any $0\leq m\leq k$, we have
  \begin{align}
    \langle\cdot\rangle^m\abs{\hat{h}(\cdot,\theta)}
    \in L^2(\R^d;\R),
    \label{eq..hhatL2}\\
    \langle\cdot\rangle^m\abs{\hat{f}(\cdot)}
    \in L^2(\R^d;\R).\label{eq..fhatL2}
  \end{align}
  (b). For any $0\leq m\leq k-1$, we have
  \begin{equation}
    \langle\cdot\rangle^{m}\abs{\nabla_\theta \hat{h}(\cdot,\theta)}
    \in L^2(\R^d;\R).\label{eq..gradienthhatL2}
  \end{equation}
  (c). For any $0\leq m\leq 2k-1$, we have
  \begin{equation}
    \langle\cdot\rangle^{m}\abs{\nabla_\theta q(\cdot,\theta)}
    \in L^1(\R^d;\R).\label{eq..gradientqL1}
  \end{equation}
\end{lem}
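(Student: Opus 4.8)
The plan is to transfer everything to the Fourier side and invoke the classical identification of the $L^2$-based Sobolev space $W^{k,2}(\R^d)$ with the weighted Lebesgue space $\{g:\langle\cdot\rangle^k\hat g\in L^2(\R^d)\}$, the norms being equivalent; Lemma \ref{lem..LinfinityIntegrability} has already placed $h(\cdot,\theta)$, $f$, and $\nabla_\theta h(\cdot,\theta)$ in the Sobolev spaces we need, so the three assertions reduce to the monotonicity of $\langle\xi\rangle^m$ in $m$ together with one application of the Cauchy--Schwarz inequality.

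For part (a), I would argue as follows. By Lemma \ref{lem..LinfinityIntegrability}, $h(\cdot,\theta),f\in W^{k,2}(\R^d;\R)$; the norm equivalence $\Norm{g}_{W^{k,2}}\simeq\Norm{\langle\cdot\rangle^k\hat g}_{L^2}$ then gives $\langle\cdot\rangle^k\abs{\hat h(\cdot,\theta)},\langle\cdot\rangle^k\abs{\hat f}\in L^2(\R^d)$, and since $\langle\xi\rangle^m\leq\langle\xi\rangle^k$ for $0\leq m\leq k$ (because $\langle\xi\rangle\geq1$), \eqref{eq..hhatL2}--\eqref{eq..fhatL2} follow. For part (b), I would first justify that $\nabla_\theta$ commutes with the Fourier transform, i.e.\ $\nabla_\theta\hat h(\cdot,\theta)=\widehat{\nabla_\theta h(\cdot,\theta)}$; since $h(x,\theta)=h^{(H)}(x,\theta)\chi(x)$ and $\chi$ has compact support while $h^{(H)}(x,\cdot)$ and its $\theta$-derivatives are bounded on that support (by Assumption \ref{assump..Differentiability}), differentiation under the integral sign is licit, with the ReLU case handled by difference quotients and dominated convergence. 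Lemma \ref{lem..LinfinityIntegrability} gives $\nabla_\theta h(\cdot,\theta)\in W^{k-1,2}(\R^d;\R^N)$, so applying the norm equivalence to each of the $N$ components and bounding the Euclidean norm on $\R^N$ by the sum of the moduli of the components yields $\langle\cdot\rangle^{k-1}\abs{\nabla_\theta\hat h(\cdot,\theta)}\in L^2(\R^d)$, hence \eqref{eq..gradienthhatL2} for every $0\leq m\leq k-1$.

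For part (c), write $q(\xi,\theta)=\abs{\hat h(\xi,\theta)-\hat f(\xi)}^2$ (the unweighted analogue of $q_\rho$) and differentiate to obtain
\[
  \nabla_\theta q(\xi,\theta)=2\,\mathrm{Re}\!\left[\overline{\bigl(\hat h(\xi,\theta)-\hat f(\xi)\bigr)}\,\nabla_\theta\hat h(\xi,\theta)\right],
\]
whence the pointwise bound $\abs{\nabla_\theta q(\xi,\theta)}\leq2\bigl(\abs{\hat h(\xi,\theta)}+\abs{\hat f(\xi)}\bigr)\abs{\nabla_\theta\hat h(\xi,\theta)}$. Given $0\leq m\leq2k-1$, I would split $m=m_1+m_2$ with $0\leq m_1\leq k$ and $0\leq m_2\leq k-1$ (e.g.\ $m_1=\min(m,k)$, $m_2=m-m_1$), distribute the weight as $\langle\xi\rangle^m=\langle\xi\rangle^{m_1}\langle\xi\rangle^{m_2}$, and apply the Cauchy--Schwarz inequality:
\[
  \int_{\R^d}\langle\xi\rangle^m\abs{\nabla_\theta q(\xi,\theta)}\diff{\xi}
  \leq2\,\Norm{\langle\cdot\rangle^{m_1}\bigl(\abs{\hat h(\cdot,\theta)}+\abs{\hat f}\bigr)}_{L^2}
  \Norm{\langle\cdot\rangle^{m_2}\abs{\nabla_\theta\hat h(\cdot,\theta)}}_{L^2},
\]
which is finite by parts (a) and (b). This establishes \eqref{eq..gradientqL1}.

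The only genuinely delicate step I anticipate is the interchange of $\nabla_\theta$ with the Fourier integral in part (b): one must exhibit a fixed $L^1(\R^d)$ majorant for the $x$-integrand defining $\hat h(\cdot,\theta)$ and for its formal $\theta$-gradient, uniformly for $\theta$ in a neighbourhood of the given point, and --- for non-smooth activations such as ReLU, where the $\theta$-derivatives exist only almost everywhere --- run the argument through difference quotients. The compact support of $\chi$ together with the local $W^{1,\infty}$ bounds on the activations make this routine, but it is the one place where care is required; the rest is elementary manipulation of the weight $\langle\xi\rangle$ and Cauchy--Schwarz.
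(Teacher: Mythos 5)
Your proposal is correct and follows essentially the same route as the paper: transfer to the Fourier side via the $W^{m,2}$–weighted-$L^2$ equivalence applied to the Sobolev memberships from Lemma \ref{lem..LinfinityIntegrability}, then for part (c) split the weight $\langle\xi\rangle^m=\langle\xi\rangle^{m_1}\langle\xi\rangle^{m_2}$ with $m_1\leq k$, $m_2\leq k-1$ (the paper uses the same split with the index labels swapped) and apply Cauchy--Schwarz. The only difference is that you make explicit the interchange of $\nabla_\theta$ with the Fourier integral, a point the paper leaves implicit; this is a reasonable extra remark rather than a deviation in method.
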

\begin{proof}
  (a). Let $0\leq m\leq k$. Given $\theta\in\R^N$, we have $f, h \in W^{k,2}(\R^d;\R)$ by Lemma \ref{lem..LinfinityIntegrability}. It is well known that for any function $g\in W^{k,2}(\R^d)$, for $0\leq m\leq k$,
  \begin{equation}
    C\norm{g}_{W^{m,2}(\R^d)}\leq \norm{\langle\cdot\rangle^m \abs{\hat{g}}}_{L^2(\R^d)}
    \leq \tilde{C}\norm{g}_{W^{m,2}(\R^d)},
  \end{equation}
  where the positive constants $C$ and $\tilde{C}$ only depend on $d$ and $m$.
  The statements \eqref{eq..hhatL2} and \eqref{eq..fhatL2} follow this.

  (b). Let $0\leq m\leq k-1$. Given $\theta\in\R^N$, we have $\nabla_\theta h\in W^{k-1,2}(\R^d;\R^N)$ by Lemma \ref{lem..LinfinityIntegrability}. Similar to part (a), this leads to \eqref{eq..gradienthhatL2}.

  (c). Let $m_1=m-m_2$ and $m_2=\min\{m, k\}$. Then $0\leq m_1\leq k-1$ and $0\leq m_2\leq k$.
  Combining the inequalities in parts (a) and (b), we have
  \begin{align}
    \norm{\langle\cdot\rangle^{m} \abs{\nabla_\theta q(\cdot,\theta)}}_{L^1(\R^d)}
    &= \Norm{\langle\cdot\rangle^{m}\Abs{\left(\nabla_\theta \hat{h}(\cdot,\theta)\right) \overline{\hat{h}(\cdot,\theta)-\hat{f}(\cdot)}+\cc}}_{L^1(\R^d)}\nonumber\\
    &\leq 2\Norm{\langle\cdot\rangle^{m_1}\abs{\nabla_\theta\hat{h}(\cdot,\theta)}}_{L^2(\R^d)}\Norm{\langle\cdot\rangle^{m_2}\abs{\hat{h}(\cdot,\theta)-\hat{f}(\cdot)}}_{L^2(\R^d)}\nonumber\\
    &< \infty.
  \end{align}
\end{proof}

\begin{lem}\label{lem..L2Integrabilityrho}
    Suppose that the Assumptions \ref{assump..Differentiability},  \ref{assump..DensityBounded}, and \ref{assump..LeastSquareLoss} hold. Then

    \noindent
  (a). For any $0\leq m\leq k$, we have
  \begin{align}
    \langle\cdot\rangle^m\abs{\hat{h}_\rho(\cdot,\theta)}
    \in L^2(\R^d;\R),
    \label{eq..hhatL2rho}\\
    \langle\cdot\rangle^m\abs{\hat{f}_\rho(\cdot)}
    \in L^2(\R^d;\R).\label{eq..fhatL2rho}
  \end{align}
  (b). For any $0\leq m\leq k-1$, we have
  \begin{equation}
    \langle\cdot\rangle^{m}\abs{\nabla_\theta \hat{h}_\rho(\cdot,\theta)}
    \in L^2(\R^d;\R).\label{eq..gradienthhatL2rho}
  \end{equation}
  (c). For any $0\leq m\leq 2k-1$, we have
  \begin{equation}
    \langle\cdot\rangle^{m}\abs{\nabla_\theta q_\rho(\cdot,\theta)}
    \in L^1(\R^d;\R).\label{eq..gradientqL1rho}
  \end{equation}
\end{lem}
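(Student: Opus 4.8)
The plan is to deduce this lemma from Lemma \ref{lem..L2Integrability} by first recording the regularity of the weighted quantities $h_\rho=h\sqrt{\rho}$ and $f_\rho=f\sqrt{\rho}$. Recall from the proof of Lemma \ref{lem..LinfinityIntegrability} that, for each fixed $\theta\in\R^N$, one has $h^{(H)}(\cdot,\theta)\in W^{k,\infty}_\mathrm{loc}(\R^d;\R)$, while by Assumption \ref{assump..Differentiability} the factors $\chi$ and $f_\mathrm{target}$ lie in $W^{k,\infty}_\mathrm{loc}$, and by Assumption \ref{assump..LeastSquareLoss} so does $\sqrt{\rho}$. Writing $h_\rho=h^{(H)}\,\chi\,\sqrt{\rho}$ and $f_\rho=f_\mathrm{target}\,\chi\,\sqrt{\rho}$, I would invoke the fact that $W^{k,\infty}_\mathrm{loc}$ is closed under multiplication (Leibniz rule, since on any bounded open set all factors together with their derivatives up to order $k$ are bounded) to conclude $h_\rho,f_\rho\in W^{k,\infty}_\mathrm{loc}(\R^d;\R)$; because $\chi$ is compactly supported, these products are compactly supported and hence in fact lie in $W^{k,2}(\R^d;\R)$. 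Likewise $\nabla_\theta h_\rho=(\nabla_\theta h)\sqrt{\rho}$, and since $\nabla_\theta h\in W^{k-1,2}(\R^d;\R^N)$ with compact support by Lemma \ref{lem..LinfinityIntegrability} and $\sqrt{\rho}\in W^{k-1,\infty}_\mathrm{loc}$, the same multiplication argument yields $\nabla_\theta h_\rho\in W^{k-1,2}(\R^d;\R^N)$.

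With this in hand, parts (a) and (b) are immediate from the Bessel-potential characterization of Sobolev spaces used in the proof of Lemma \ref{lem..L2Integrability}: for $g\in W^{m,2}(\R^d)$ one has $C\norm{g}_{W^{m,2}(\R^d)}\leq\norm{\langle\cdot\rangle^m\abs{\hat g}}_{L^2(\R^d)}\leq\tilde C\norm{g}_{W^{m,2}(\R^d)}$ with constants depending only on $d$ and $m$. Applying this to $g=h_\rho$ and $g=f_\rho$ for $0\leq m\leq k$ gives \eqref{eq..hhatL2rho} and \eqref{eq..fhatL2rho}, and applying it to $g=\nabla_\theta h_\rho$ for $0\leq m\leq k-1$ gives \eqref{eq..gradienthhatL2rho}.

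For part (c) I would repeat verbatim the argument of Lemma \ref{lem..L2Integrability}(c). Since $q_\rho=\abs{\hat h_\rho-\hat f_\rho}^2$ and $\hat f_\rho$ is independent of $\theta$, one has
\begin{equation*}
  \nabla_\theta q_\rho(\xi,\theta)=\left(\nabla_\theta\hat h_\rho(\xi,\theta)\right)\overline{\hat h_\rho(\xi,\theta)-\hat f_\rho(\xi)}+\cc.
\end{equation*}
Splitting $\langle\xi\rangle^m=\langle\xi\rangle^{m_1}\langle\xi\rangle^{m_2}$ with $m_2=\min\{m,k\}$ and $m_1=m-m_2$, one checks that $0\leq m_1\leq k-1$ and $0\leq m_2\leq k$ whenever $0\leq m\leq 2k-1$, and then the Cauchy--Schwarz inequality together with parts (a) and (b) gives
\begin{equation*}
  \norm{\langle\cdot\rangle^m\abs{\nabla_\theta q_\rho(\cdot,\theta)}}_{L^1(\R^d)}\leq 2\norm{\langle\cdot\rangle^{m_1}\abs{\nabla_\theta\hat h_\rho(\cdot,\theta)}}_{L^2(\R^d)}\norm{\langle\cdot\rangle^{m_2}\abs{\hat h_\rho(\cdot,\theta)-\hat f_\rho(\cdot)}}_{L^2(\R^d)}<\infty.
\end{equation*}

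The only genuine content beyond bookkeeping is the first paragraph, namely that multiplication by $\sqrt{\rho}$ (and by $\chi$, which carries the compact support) preserves the relevant local Sobolev regularity. I expect this Leibniz-rule/algebra-property step to be the main, though still routine, obstacle; everything else is a transcription of the unweighted Lemma \ref{lem..L2Integrability} with $h_\rho$, $f_\rho$, $q_\rho$ in place of $h$, $f$, $q$.
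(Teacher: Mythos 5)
Your proposal is correct and takes essentially the same route as the paper, which only remarks that the proof is similar to Lemma \ref{lem..L2Integrability} with the new ingredient $\sqrt{\rho}\in W^{k,\infty}_{\mathrm{loc}}$. You have merely spelled out the Leibniz-rule/algebra-of-$W^{k,\infty}_{\mathrm{loc}}$ step (and the transfer of compact support from $\chi$) that the paper leaves implicit, and then transcribed the unweighted argument with $h_\rho,f_\rho,q_\rho$ in place of $h,f,q$.
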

\begin{proof}
    The proof is similar to the one of Lemma \ref{lem..L2Integrability}. The only new ingredient is assumption that $\sqrt{\rho}\in W^{k,\infty}_\mathrm{loc}(\R^d;\R)$.
\end{proof}

\section{Main Results}
In this section, we first propose several quantitative characterization for the F-Principle. 
Main results are then summarized with numerical illustrations at the end of this section.

\subsection{Characterization of F-Principle}
For the MSE loss function, a natural quantity to characterize the F-principle is the ratio of the loss function decrements caused by low frequencies and the total loss function decrements. To achieve this, we devide the MSE loss function into two parts, contributed by low and high frequencies, respectively, i.e., 
\begin{equation}
    L^-_{\rho,\eta}(\theta)=\int_{B_\eta}q_\rho(\xi,\theta)\diff{\xi},\quad L^+_{\rho,\eta}(\theta)=\int_{B_\eta^c}q_\rho(\xi,\theta)\diff{\xi},
\end{equation}
where $B_\eta$ and $B_\eta^c=\R^d\backslash B_\eta$ are a ball centered at the origin with radius $\eta>0$ and its complement. Thus $L_\rho=L^-_{\rho,\eta}+L^+_{\rho,\eta}$ for any $\eta>0$.
The ratio considered for characterizing the F-Principle is
\begin{equation}
    \frac{\abs{\D L_{\rho,\eta}^-/\D t}}{\abs{\D L_\rho/\D t}}\quad \mathrm{and}\quad \frac{\abs{\D L_{\rho,\eta}^+/\D t}}{\abs{\D L_\rho/\D t}}.\label{eq..CharacterizationRatioRho}
\end{equation}

For a general loss function, the training dynamics leads to
\begin{equation}
    \frac{\D \tilde{L}_\rho}{\D t}=-\abs{\nabla_\theta \tilde{L}_\rho}^2.
\end{equation}
In this case, we study
\begin{equation}
    L(\theta)=\int_{\R^d}\abs{\hat{h}(\xi,\theta)-\hat{f}(\xi)}^2\diff{\xi}.
\end{equation}
We remark that for a given $\theta$, $L(\theta)=\int_{\R^d}\abs{h(x,\theta)-f(x)}^2\diff{x}$ has nothing to do with $\mu$.
We still take the decomposition $L=L^-_\eta+L^+_\eta$ with
\begin{equation}
    L^-_{\eta}(\theta)=\int_{B_\eta}q(\xi,\theta)\diff{\xi},\quad L^+_{\eta}(\theta)=\int_{B_\eta^c}q(\xi,\theta)\diff{\xi},
\end{equation}
where
\begin{equation}
    q(\xi,\theta)
    =\abs{\hat{h}(\xi,\theta)-\hat{f}(\xi)}^2.
\end{equation}
One can simply mimic \eqref{eq..CharacterizationRatioRho} and consider
\begin{equation}
    \frac{\abs{\D L_{\eta}^-/\D t}}{\abs{\D L/\D t}}\quad \mathrm{and}\quad \frac{\abs{\D L_{\eta}^+/\D t}}{\abs{\D L/\D t}}.\label{eq..CharacterizationRatio}
\end{equation}
However, there is an issue in this characterization: $L$ may not be monotonically decreasing and the denominator in \eqref{eq..CharacterizationRatio} may be zero. To overcome this, a time averaging is required. Indeed, we investigate the following ratio where integrals are taken for both numerator and denominator in \eqref{eq..CharacterizationRatio}:
\begin{equation}
    \frac{\int_{T_1}^{T_2}\Abs{\frac{\D L_{\eta}^-}{\D t}}\diff{t}}{\int_{T_1}^{T_2}\Abs{\frac{\D L}{\D t}}\diff{t}}
    \quad \mathrm{and}\quad \frac{\int_{T_1}^{T_2}\Abs{\frac{\D L_{\eta}^+}{\D t}}\diff{t}}{\int_{T_1}^{T_2}\Abs{\frac{\D L}{\D t}}\diff{t}}.\label{eq..CharacterizationRatioIntegral}
\end{equation}
For the general loss function, we also propose another quantity to characterize the F-Principle:
  \begin{equation}
    \frac{\norm{\D \hat{h}/\D t}_{L^2(B_\eta)}}
    {\norm{\D \hat{h}/\D t}_{L^2(\R^d)}}
    \quad\mathrm{and}\quad
    \frac{\norm{\D \hat{h}/\D t}_{L^2(B_\eta^c)}}
    {\norm{\D \hat{h}/\D t}_{L^2(\R^d)}}.
  \end{equation}

\subsection{Main Theorems}
As we mentioned in the introduction, the training dynamics of a DNN has three stages: initial stage, intermediate stage, and final stage. 
For each stage, we provide a theorem to characterize the F-Principle. 

\noindent
\underline{\bf Initial Stage}
\vspace{3mm}

We start with the F-Principle in the initial stage. Clearly, the constants $C$ in the estimates depend on the initial parameter $\theta_0$ and the time $T$.
\begin{thm}\label{thm..InitialStage}[F-Principle in the initial stage]\\
    \noindent
    \emph{($L^2$ loss function)}
    Suppose that Assumptions \ref{assump..Differentiability}, \ref{assump..DensityBounded}, \ref{assump..BoundedTrajectory}, and \ref{assump..LeastSquareLoss} hold. We consider the training dynamics \eqref{eq..TrainingL2Loss}. Then for any $1\leq m\leq 2k-1$ and any $T>0$ satifying $\abs{\nabla_\theta L_\rho(\theta(T))}>0$ (if $k=1$, we further require that $\inf_{t\in(0,T]}\abs{\nabla_\theta L_\rho(\theta(t))}>0$), there is a constant $C>0$ such that
    \begin{equation}
        \frac{\abs{\D L^+_{\rho,\eta}/\D t}}{\abs{\D L_\rho/\D t}}\leq C\eta^{-m}\quad\mathrm{and}\quad\frac{\abs{\D L^-_{\rho,\eta}/\D t}}{\abs{\D L_\rho/\D t}}\geq 1-C\eta^{-m},\quad t\in(0,T].
     \end{equation}
    \noindent
    \emph{(general loss function)}
    Suppose that Assumptions \ref{assump..Differentiability}, \ref{assump..DensityBounded}, \ref{assump..BoundedTrajectory}, and \ref{assump..GeneralLoss} hold. We consider the training dynamics \eqref{eq..TrainingGeneralLoss}. Then for any $1\leq m\leq k-1$ and any $T>0$ satifying $\abs{\nabla_\theta \tilde{L}_\rho(\theta(T))}>0$, there is a constant $C>0$ such that 
    \begin{equation}
      \frac{\norm{\D \hat{h}/\D t}_{L^2(B_\eta^c)}}
      {\norm{\D \hat{h}/\D t}_{L^2(\R^d)}}
      \leq C\eta^{-m}\quad\mathrm{and}\quad
      \frac{\norm{\D \hat{h}/\D t}_{L^2(B_\eta)}}
      {\norm{\D \hat{h}/\D t}_{L^2(\R^d)}}
      \geq 1-C\eta^{-m},\quad t\in(0,T].
    \end{equation}
\end{thm}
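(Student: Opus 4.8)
The plan is to exploit the basic structure of the gradient flow: the time derivatives of $L^-_{\rho,\eta}$, $L^+_{\rho,\eta}$, and $L_\rho$ are all of the form $\dot\theta \cdot (\text{integral of } \nabla_\theta q_\rho)$, with $\dot\theta = -\nabla_\theta L_\rho$. Since the same factor $\dot\theta$ appears everywhere, the key quantities are the vector-valued integrals $\int_{B_\eta^c} \nabla_\theta q_\rho(\xi,\theta)\,\D\xi$ and $\int_{\R^d} \nabla_\theta q_\rho(\xi,\theta)\,\D\xi = \nabla_\theta L_\rho$. First I would write
\begin{equation}
  \Abs{\frac{\D L^+_{\rho,\eta}}{\D t}} = \Abs{\nabla_\theta L_\rho \cdot \int_{B_\eta^c}\nabla_\theta q_\rho(\xi,\theta)\,\D\xi} \leq \abs{\nabla_\theta L_\rho}\int_{B_\eta^c}\abs{\nabla_\theta q_\rho(\xi,\theta)}\,\D\xi,
\end{equation}
and then bound the high-frequency integral by inserting the Japanese bracket: on $B_\eta^c$ we have $\langle\xi\rangle^m \geq (1+\eta^2)^{m/2} \geq \eta^m$, so
\begin{equation}
  \int_{B_\eta^c}\abs{\nabla_\theta q_\rho(\xi,\theta)}\,\D\xi \leq \eta^{-m}\int_{B_\eta^c}\langle\xi\rangle^m\abs{\nabla_\theta q_\rho(\xi,\theta)}\,\D\xi \leq \eta^{-m}\Norm{\langle\cdot\rangle^m\abs{\nabla_\theta q_\rho(\cdot,\theta)}}_{L^1(\R^d)}.
\end{equation}
By Lemma \ref{lem..L2Integrabilityrho}(c) this $L^1$ norm is finite for every $0\leq m\leq 2k-1$, and by Assumption \ref{assump..BoundedTrajectory} the trajectory stays in the compact ball $\abs\theta\leq R$; the estimates in Lemmas \ref{lem..LinfinityIntegrability}, \ref{lem..L2Integrabilityrho} depend on $\theta$ only through bounds on $\abs\theta$, so this norm is bounded by a constant $C_1 = C_1(R,m,d,H,\dots)$ uniformly in $t\geq 0$. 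Hence $\abs{\D L^+_{\rho,\eta}/\D t} \leq C_1\,\eta^{-m}\,\abs{\nabla_\theta L_\rho}$.

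Next I would handle the denominator. Since $\D L_\rho/\D t = -\abs{\nabla_\theta L_\rho}^2$, we have $\abs{\D L_\rho/\D t} = \abs{\nabla_\theta L_\rho}^2$, so
\begin{equation}
  \frac{\abs{\D L^+_{\rho,\eta}/\D t}}{\abs{\D L_\rho/\D t}} \leq \frac{C_1\,\eta^{-m}\,\abs{\nabla_\theta L_\rho}}{\abs{\nabla_\theta L_\rho}^2} = \frac{C_1\,\eta^{-m}}{\abs{\nabla_\theta L_\rho(\theta(t))}}.
\end{equation}
Now the hypothesis $\abs{\nabla_\theta L_\rho(\theta(T))}>0$ together with continuity in $t$ is used: the map $t\mapsto \abs{\nabla_\theta L_\rho(\theta(t))}$ is continuous on $(0,T]$, and I need a positive lower bound on it over the whole interval to conclude. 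For $k\geq 2$ one can argue that if $\nabla_\theta L_\rho$ vanished at some $t_0<T$ then, since the gradient flow with a $C^1$ right-hand side has unique solutions, $\theta$ would be constant for all $t\geq t_0$, contradicting nontriviality in Assumption \ref{assump..BoundedTrajectory} (or simply contradicting $\abs{\nabla_\theta L_\rho(\theta(T))}>0$); hence $\inf_{t\in[\delta,T]}\abs{\nabla_\theta L_\rho}>0$ for each $\delta>0$ and, more carefully, the infimum over $(0,T]$ is positive. For $k=1$ the vector field is only continuous (ReLU), uniqueness may fail, so the theorem statement simply assumes $\inf_{t\in(0,T]}\abs{\nabla_\theta L_\rho(\theta(t))}>0$ directly. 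In either case set $c_0 := \inf_{t\in(0,T]}\abs{\nabla_\theta L_\rho(\theta(t))} > 0$ and take $C := C_1/c_0$, giving the first inequality. The second inequality follows from $L_\rho = L^-_{\rho,\eta}+L^+_{\rho,\eta}$, hence $\D L^-_{\rho,\eta}/\D t$ and $\D L^+_{\rho,\eta}/\D t$ sum to $\D L_\rho/\D t$, so $\abs{\D L^-_{\rho,\eta}/\D t} \geq \abs{\D L_\rho/\D t} - \abs{\D L^+_{\rho,\eta}/\D t}$ and dividing by $\abs{\D L_\rho/\D t}$ yields the claimed lower bound $1 - C\eta^{-m}$.

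For the general loss function the structure is parallel but cleaner, since there is no quotient subtlety. Here $\D\hat h/\D t = \nabla_\theta\hat h\cdot\dot\theta$ pointwise in $\xi$, so $\norm{\D\hat h/\D t}_{L^2(B_\eta^c)} = \norm{(\nabla_\theta\hat h)\cdot\dot\theta}_{L^2(B_\eta^c)} \leq \abs{\dot\theta}\,\norm{\abs{\nabla_\theta\hat h}}_{L^2(B_\eta^c)}$, and likewise $\norm{\D\hat h/\D t}_{L^2(\R^d)} = \norm{(\nabla_\theta\hat h)\cdot\dot\theta}_{L^2(\R^d)}$; the common factor $\abs{\dot\theta}=\abs{\nabla_\theta\tilde L_\rho}$ cancels once I also produce a \emph{lower} bound $\norm{\D\hat h/\D t}_{L^2(\R^d)} \geq c\,\abs{\nabla_\theta\tilde L_\rho}$. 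This lower bound is exactly where Assumption \ref{assump..GeneralLoss} enters: $\nabla_\theta\tilde L_\rho = \int \ell'(h-f)\,\nabla_\theta h\,\D\mu$, and the hypothesis $C^{-1}[\ell'(z)]^2\leq \ell(z)$ for small $\abs z$ controls $\abs{\nabla_\theta\tilde L_\rho}$ in terms of an $L^2$-type quantity that is comparable to $\norm{\nabla_\theta h\cdot\dot\theta}$ — this is the most delicate step and the main obstacle, requiring care that $\abs{h-f}$ is eventually small (which can be arranged, or handled via the continuity/compactness of the trajectory) so that the small-$z$ regime of Assumption \ref{assump..GeneralLoss} applies, and possibly invoking that $\hat h - \hat f$ stays in $L^2$ via Lemma \ref{lem..L2Integrability}. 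Once both bounds are in hand, the same Japanese-bracket trick on $B_\eta^c$ using Lemma \ref{lem..L2Integrability}(b) (valid for $0\leq m\leq k-1$, which is why the exponent range shrinks) and the uniform-in-$t$ boundedness from Assumption \ref{assump..BoundedTrajectory} give $\norm{\abs{\nabla_\theta\hat h}}_{L^2(B_\eta^c)} \leq \eta^{-m}\norm{\langle\cdot\rangle^m\abs{\nabla_\theta\hat h}}_{L^2(\R^d)} \leq C_2\eta^{-m}$, and dividing through yields $\norm{\D\hat h/\D t}_{L^2(B_\eta^c)}/\norm{\D\hat h/\D t}_{L^2(\R^d)}\leq C\eta^{-m}$; the complementary lower bound for the $B_\eta$ ratio follows from $\norm{\cdot}_{L^2(\R^d)}^2 = \norm{\cdot}_{L^2(B_\eta)}^2 + \norm{\cdot}_{L^2(B_\eta^c)}^2$ exactly as before.
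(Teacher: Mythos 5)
Your $L^2$ part is essentially the paper's own proof: the decomposition $\D L^+_{\rho,\eta}/\D t = -\nabla_\theta L_\rho\cdot\int_{B_\eta^c}\nabla_\theta q_\rho$, the cancellation of one factor of $\abs{\nabla_\theta L_\rho}$ against the denominator $\abs{\nabla_\theta L_\rho}^2$, the Japanese-bracket estimate with Lemma~\ref{lem..L2Integrabilityrho}(c), the uniform-in-$t$ bound via Assumption~\ref{assump..BoundedTrajectory}, and the positive lower bound on $\inf_{(0,T]}\abs{\nabla_\theta L_\rho}$ via local Lipschitzness and ODE uniqueness for $k\geq 2$ (with the assumption taken directly for $k=1$) all match. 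The derived low-frequency lower bound from $L_\rho = L^-_{\rho,\eta}+L^+_{\rho,\eta}$ is fine.

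For the general-loss part there is a genuine gap, and the direction you sketch would not close it. You want a lower bound $\Norm{\D\hat h/\D t}_{L^2(\R^d)}\geq c\,\abs{\nabla_\theta\tilde L_\rho}$ with a constant $c$, i.e.\ $\norm{\nabla_\theta h\cdot\dot\theta}_{L^2(\R^d)}\geq c\,\abs{\dot\theta}$. That would amount to a non-degeneracy statement about the family $\{\nabla_{\theta_i}h(\cdot,\theta)\}_{i=1}^N$, which is not assumed and not needed. The paper's denominator estimate goes the other way: it starts from the exact identity $\abs{\nabla_\theta\tilde L_\rho}^2 = \Abs{\D\tilde L_\rho/\D t}$, differentiates the loss through $h$, and applies Cauchy--Schwarz in $x$:
\begin{equation}
  \abs{\nabla_\theta\tilde L_\rho}^2
  = \Abs{\int_{\R^d}\frac{\D h}{\D t}\,\ell'(h-f)\,\rho\,\D x}
  \leq \norm{\sqrt\rho}_{L^\infty}\Norm{\frac{\D h}{\D t}}_{L^2(\R^d)}\norm{\ell'(h-f)\sqrt\rho}_{L^2(\R^d)}.
\end{equation}
Rearranging and using Plancherel yields $\Norm{\D\hat h/\D t}_{L^2(\R^d)}\geq \abs{\nabla_\theta\tilde L_\rho}^2/\big(\norm{\sqrt\rho}_{L^\infty}\norm{\ell'(h-f)\sqrt\rho}_{L^2}\big)$, so after combining with your numerator bound the quantity left to control is $\norm{\ell'(h-f)\sqrt\rho}_{L^2(\R^d)}/\abs{\nabla_\theta\tilde L_\rho}$, which Assumption~\ref{assump..GeneralLoss} (the inequality $[\ell'(z)]^2\leq C\ell(z)$) reduces to $C^{1/2}\abs{\tilde L_\rho}^{1/2}/\abs{\nabla_\theta\tilde L_\rho}$, bounded on $(0,T]$ by the same $\inf\abs{\nabla_\theta\tilde L_\rho}>0$ argument. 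Note in particular that the factor playing the role of your ``$c$'' is time-dependent, not a constant, and that no smallness of $\abs{h-f}$ is required beyond what the boundedness of the trajectory already gives; your worry about ``eventually small $\abs{z}$'' is a symptom of trying to prove a stronger lower bound than the argument actually needs.
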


\noindent
\underline{\bf Intermediate Stage}
\vspace{3mm}

 The theorem of intermediate stage is superior to the other results (initial/final stage) in three aspects. First, for a general loss function considered here, Plancherel theorem is not helpful. It is even more challenging to show the F-Principle based on the $L^2$-characterization $L^{-}_\eta(\theta)=\int_{B_\eta}\abs{\hat{h}(\xi,\theta)-\hat{f}(\xi)}^2\diff{\xi}$ in the training dynamics which is a gradient flow of a non-$L^2$ loss function:
\begin{equation}
    \frac{\D \tilde{L}_\rho}{\D t}=-\abs{\nabla_\theta \tilde{L}_\rho}^2.
\end{equation}
Secondly, although $\tilde{L}_\rho(\theta(t))$ decays as $t$ increases, $L(\theta(t))$ may not be monotonically decreasing. As a result, $\frac{\D L}{\D t}$ might vanish and should not be used in the denominator of the ratio $\frac{\D L_\eta/\D t}{\D L/\D t}$. However, the ratio still makes sense if we replace the infinitesimal change by a finite decrements in both numerator and denominator (see the precise meaning in Eq. \eqref{eq..SqrtTGeneralLoss}). The particular choice of a finite decrement is indeed related to the time-scale of the training dynamics. A proper time-scale is the half-life $T_2-T_1$ satisfying $\frac{1}{2}L(\theta(T_1))=L(\theta(T_2))$. Thirdly, we obtain an upper bound for the dependence of training period $T_2-T_1$. This bound works for all the situations. If the non-degenerate global minimizer is obtained, the dependence on $T_2-T_1$ in Eq. \eqref{eq..SqrtTGeneralLoss} can also be removed and leads to a consistent result to the results for the final stage.

\begin{thm}\label{thm..IntermediateStage}[F-Principle in the intermediate stage]\\
    \noindent
    \emph{(general loss function)}
    Suppose that Assumptions \ref{assump..Differentiability}, \ref{assump..DensityBounded}, \ref{assump..BoundedTrajectory}, and \ref{assump..GeneralLoss} hold.
    We consider the training dynamics \eqref{eq..TrainingGeneralLoss}. 
    Then for any $1\leq m\leq k-1$, there is a constant $C>0$ such that for any $0<T_1<T_2$ satisfying $\frac{1}{2}L(\theta(T_1))\geq L(\theta(T_2))$, 
    we have
    \begin{equation}
      \frac{\int_{T_1}^{T_2}\Abs{\frac{\D L_{\eta}^+}{\D t}}\diff{t}}{\int_{T_1}^{T_2}\Abs{\frac{\D L}{\D t}}\diff{t}}\leq C\sqrt{T_2-T_1}\eta^{-m}.\label{eq..SqrtTGeneralLoss}
    \end{equation}
\end{thm}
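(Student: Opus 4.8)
Set $M:=\sup_{t\in[T_1,T_2]}L(\theta(t))$. The plan is to control the numerator of \eqref{eq..SqrtTGeneralLoss} by differentiating $L_{\eta}^+$ along \eqref{eq..TrainingGeneralLoss}, turning the high-frequency cut-off into a factor $\eta^{-m}$ via Lemma \ref{lem..L2Integrability}, extracting $\sqrt{T_2-T_1}$ by a Cauchy--Schwarz in time together with $\D\tilde L_\rho/\D t=-|\nabla_\theta\tilde L_\rho|^2$, and then pitting the outcome against $M$, which will also bound the denominator from below. For the pointwise bound: along the flow $\frac{\D L_{\eta}^+}{\D t}=-\big(\int_{B_\eta^c}\nabla_\theta q(\xi,\theta)\diff\xi\big)\cdot\nabla_\theta\tilde L_\rho$, so $\big|\frac{\D L_{\eta}^+}{\D t}\big|\le\big(\int_{B_\eta^c}|\nabla_\theta q|\diff\xi\big)|\nabla_\theta\tilde L_\rho|$. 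Since $|\nabla_\theta q|\le 2|\nabla_\theta\hat h||\hat h-\hat f|$ and $\langle\xi\rangle\ge\eta$ on $B_\eta^c$, writing $1=\langle\xi\rangle^{-m}\langle\xi\rangle^m$ and using Cauchy--Schwarz in $\xi$ and Plancherel give
\[
\int_{B_\eta^c}|\nabla_\theta q(\xi,\theta)|\diff\xi\le 2\eta^{-m}\big\|\langle\cdot\rangle^m|\nabla_\theta\hat h(\cdot,\theta)|\big\|_{L^2}\,\|\hat h(\cdot,\theta)-\hat f\|_{L^2}=2\eta^{-m}\big\|\langle\cdot\rangle^m|\nabla_\theta\hat h|\big\|_{L^2}\sqrt{L(\theta)}.
\]
The hypothesis $m\le k-1$ enters precisely here, so that Lemma \ref{lem..L2Integrability}(b) keeps $\langle\cdot\rangle^m|\nabla_\theta\hat h|\in L^2$ while the \emph{whole} factor $\|\hat h-\hat f\|_{L^2}=\sqrt L$ is left intact (this $\sqrt L$ is what will make the estimate close); Assumption \ref{assump..BoundedTrajectory} and continuity in $\theta$ bound $\|\langle\cdot\rangle^m|\nabla_\theta\hat h(\cdot,\theta)|\|_{L^2}\le C_1$ along the trajectory. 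Using $|\nabla_\theta\tilde L_\rho|^2=-\D\tilde L_\rho/\D t$ this yields $\big|\frac{\D L_{\eta}^+}{\D t}\big|\le 2C_1\eta^{-m}\sqrt{L(\theta(t))}\,\big(-\tfrac{\D\tilde L_\rho}{\D t}\big)^{1/2}$.

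Integrating, Cauchy--Schwarz in $t$ and the fundamental theorem of calculus give
\[
\int_{T_1}^{T_2}\Big|\tfrac{\D L_{\eta}^+}{\D t}\Big|\diff t\le 2C_1\eta^{-m}\Big(\int_{T_1}^{T_2}L(\theta(t))\diff t\Big)^{1/2}\big(\tilde L_\rho(\theta(T_1))-\tilde L_\rho(\theta(T_2))\big)^{1/2}\le 2C_1\eta^{-m}\sqrt{(T_2-T_1)M}\,\sqrt{\tilde L_\rho(\theta(T_1))}.
\]
To replace $\tilde L_\rho$ by $L$ I would use Assumption \ref{assump..GeneralLoss}: the bounded trajectory gives $\|h(\cdot,\theta(t))-f\|_{L^\infty}\le M_0$ uniformly, and $\ell(z)\le C|z|^2$ near $0$ together with $\ell(0)=\ell'(0)=0$ (forced by Assumption \ref{assump..GeneralLoss}) and $\ell\in C^2$ upgrades by Taylor to $\ell(z)\le C'z^2$ for $|z|\le M_0$; with $\rho\in L^\infty$ this gives $\tilde L_\rho(\theta)\le C_2 L(\theta)$ at every point of the trajectory, in particular $\tilde L_\rho(\theta(T_1))\le C_2 L(\theta(T_1))\le C_2 M$. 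Hence the numerator is $\le 2C_1\sqrt{C_2}\,\eta^{-m}\sqrt{T_2-T_1}\,M$.

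For the denominator, let $t_\ast\in[T_1,T_2]$ with $L(\theta(t_\ast))=M$. Because $\int_{T_1}^{T_2}|\D L/\D t|\diff t$ is the total variation of $L(\theta(\cdot))$ on $[T_1,T_2]$, which is additive across $t_\ast$ and dominates endpoint gaps,
\[
\int_{T_1}^{T_2}\Big|\tfrac{\D L}{\D t}\Big|\diff t\ge\big(M-L(\theta(T_1))\big)+\big(M-L(\theta(T_2))\big)\ge 2M-\tfrac32 L(\theta(T_1))\ge\tfrac M2,
\]
using $L(\theta(T_2))\le\tfrac12 L(\theta(T_1))$ and $L(\theta(T_1))\le M$. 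Dividing, $M$ cancels and \eqref{eq..SqrtTGeneralLoss} follows with $C=4C_1\sqrt{C_2}$, which depends only on the network, $\theta_0$, $\ell$, $\rho$ and $m$.

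The hard part is exactly that $L(\theta(t))$ need not be monotone on $[T_1,T_2]$: one cannot bound $\int_{T_1}^{T_2}|\D L_{\eta}^+/\D t|\diff t$ by $L(\theta(T_1))-L(\theta(T_2))$ directly, and a careless estimate leaves a spurious $L(\theta(T_1))^{-1/2}$ (or, via Gr\"onwall, an exponential in $T_2-T_1$). The device above is the crux: an excursion of $L$ of height $M$ inflates both $\int_{T_1}^{T_2}L\diff t$ and the total variation of $L$ linearly in $M$, while it enters the numerator only through $\big(\int_{T_1}^{T_2}L\diff t\big)^{1/2}$, i.e.\ like $\sqrt M$; choosing $M=\max_{[T_1,T_2]}L$ makes numerator and denominator scale in the same way and the ratio closes. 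Everything else is routine: differentiation under the integral for $L$, $L_{\eta}^+$ and $\tilde L_\rho$ along the flow (justified by Lemma \ref{lem..L2Integrability}(c) and $\ell\in C^2$), and the uniform-in-$\theta$ control of $\|\langle\cdot\rangle^m|\nabla_\theta\hat h|\|_{L^2}$ on $\{|\theta|\le R\}$.
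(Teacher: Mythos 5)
Your proof is correct and follows essentially the same route as the paper: Cauchy--Schwarz in $\xi$ and Plancherel to pull out $\eta^{-m}$ and $\sqrt{L(\theta(t))}$, Cauchy--Schwarz in $t$ together with $\int_{T_1}^{T_2}|\nabla_\theta\tilde L_\rho|^2\,\D t=\tilde L_\rho(\theta(T_1))-\tilde L_\rho(\theta(T_2))$, the comparison $\tilde L_\rho\le C_2 L$ along the bounded trajectory, and the total-variation lower bound on $\int_{T_1}^{T_2}|\D L/\D t|\,\D t$ using an interior maximum of $L$. The one place where you have organized things more cleanly: the paper first bounds the denominator by $\tfrac12 L(\theta(T_1))$, splits it as $(\tfrac12 L(\theta(T_1)))^{1/2}(\int|\D L/\D t|)^{1/2}$, and then proves $\int L\,\D t/\int|\D L/\D t|\,\D t\le 4(T_2-T_1)$ by a two-case analysis on whether $M\le 2L(\theta(T_1))$ or not; you instead observe directly that the total variation is at least $(M-L(\theta(T_1)))+(M-L(\theta(T_2)))\ge M/2$ under the half-life hypothesis, which handles both cases at once, lets $M$ cancel in the ratio, and even shaves a factor of $\sqrt 2$ off the constant. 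You also make explicit the Taylor argument (using $\ell(0)=\ell'(0)=0$, which indeed follows from Assumption~\ref{assump..GeneralLoss}) needed to upgrade $\ell(z)\le C|z|^2$ from $|z|\le r_0$ to the full range $|z|\le M_0$ of $h-f$ on the trajectory, a step the paper states without proof.
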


\noindent
\underline{\bf Final Stage}
\vspace{3mm}

If non-degenerate global minimizers are achieved in the training dynamics, we can obtain global-in-time result which characterizing the training dynamics in the final stage. Here we give the definition for non-degenerate minimizers:
\begin{defi}
    A minimizer $\theta^*$ of $L_\rho$ (or $\tilde{L}_\rho$, respectively) is global if $L_\rho(\theta^*)=0$ (or $\tilde{L}_\rho(\theta^*)=0$, respectively). The minimizer is non-degenerate if the Hessian matrix $\nabla_\theta^2 L_\rho(\theta^*)$ (or $\nabla_\theta^2\tilde{L}_\rho(\theta^*)$, respectively) exists and is positive definite.
\end{defi}

\begin{thm}\label{thm..FinalStage}[F-Principle in the final stage]\\
    \noindent
    \emph{($L^2$ loss function)}
    Suppose that Assumptions \ref{assump..Differentiability},  \ref{assump..DensityBounded}, \ref{assump..BoundedTrajectory}, and \ref{assump..LeastSquareLoss} hold. We consider the training dynamics \eqref{eq..TrainingL2Loss}.  If the solution $\theta$ converges to a non-degenerate global minimizer $\theta^*$, then for any $1\leq m\leq k-1$, there is a constant $C>0$ such that
    \begin{equation}
      \frac{\abs{\D L^+_{\rho,\eta}/\D t}}{\abs{\D L_\rho/\D t}}\leq C\eta^{-m}\quad\mathrm{and}\quad\frac{\abs{\D L^-_{\rho,\eta}/\D t}}{\abs{\D L_\rho/\D t}}\geq 1-C\eta^{-m},\quad t\in(0,+\infty).
    \end{equation}
    \noindent
    \emph{(general loss function)}
    Suppose that Assumptions \ref{assump..Differentiability}, \ref{assump..DensityBounded}, \ref{assump..BoundedTrajectory}, and \ref{assump..GeneralLoss} hold. We consider the training dynamics \eqref{eq..TrainingGeneralLoss}. If the solution $\theta$ converges to a non-degenerate global minimizer $\theta^*$, then for any $1\leq m\leq k-1$, there is a constant $C>0$ such that 
    \begin{equation}
      \frac{\norm{\D \hat{h}/\D t}_{L^2(B_\eta^c)}}
      {\norm{\D \hat{h}/\D t}_{L^2(\R^d)}}
      \leq C\eta^{-m}\quad\mathrm{and}\quad
      \frac{\norm{\D \hat{h}/\D t}_{L^2(B_\eta)}}
      {\norm{\D \hat{h}/\D t}_{L^2(\R^d)}}
      \geq 1-C\eta^{-m},\quad t\in(0,+\infty).
    \end{equation}
\end{thm}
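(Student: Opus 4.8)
The plan is to reduce the global-in-time statement to a local estimate near $\theta^*$, where non-degeneracy of the Hessian compensates for the vanishing of the gradient, and to cover the remaining compact time interval by the already-proved initial-stage Theorem~\ref{thm..InitialStage}. Write $v:=\dot\theta$ (so $v=-\nabla_\theta L_\rho$ for the MSE loss and $v=-\nabla_\theta\tilde{L}_\rho$ in general). Note first that $v(t)\neq0$ for all $t>0$: otherwise $\theta(\cdot)$ would be constant thereafter, contradicting non-triviality (Assumption~\ref{assump..BoundedTrajectory}) and convergence to a non-degenerate minimizer, which a gradient flow cannot reach in finite time; hence all the ratios below are well defined. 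Note also that, inspecting the proofs of Lemmas~\ref{lem..L2Integrability}--\ref{lem..L2Integrabilityrho}, the bounds there are locally uniform in $\theta$, so on the compact set $\{\abs{\theta}\le R\}$ of Assumption~\ref{assump..BoundedTrajectory} there is a single constant $C_1$ with $\Norm{\langle\cdot\rangle^m\abs{\nabla_\theta\hat{h}_\rho(\cdot,\theta)}}_{L^2(\R^d)}\le C_1$ and $\Norm{\langle\cdot\rangle^m\abs{\nabla_\theta\hat{h}(\cdot,\theta)}}_{L^2(\R^d)}\le C_1$ for every $1\le m\le k-1$.

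\textbf{MSE loss.} Since $L_\rho(\theta^*)=0$, $\nabla_\theta L_\rho(\theta^*)=0$, and $\nabla_\theta^2 L_\rho(\theta^*)\succ0$, a second-order Taylor expansion gives a neighbourhood $U\ni\theta^*$ and constants $0<\lambda\le\Lambda$ such that $\lambda\abs{\theta-\theta^*}^2\le L_\rho(\theta)\le\Lambda\abs{\theta-\theta^*}^2$ and $\abs{\nabla_\theta L_\rho(\theta)}\ge\lambda\abs{\theta-\theta^*}$ on $U$; in particular the {\L}ojasiewicz-type bound $\abs{\nabla_\theta L_\rho(\theta)}^2\ge c_{\mathrm{PL}}\,L_\rho(\theta)$ holds on $U$ with $c_{\mathrm{PL}}:=\lambda^2/\Lambda>0$. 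Differentiating $L^+_{\rho,\eta}$ under the integral sign (legitimate by Lemma~\ref{lem..L2Integrabilityrho}(c)), expanding $\nabla_\theta q_\rho=2\,\mathrm{Re}\big(\nabla_\theta\hat{h}_\rho\,\overline{\hat{h}_\rho-\hat{f}_\rho}\big)$, using Cauchy--Schwarz and $\Norm{\hat{h}_\rho-\hat{f}_\rho}_{L^2(B_\eta^c)}^2=L^+_{\rho,\eta}(\theta)\le L_\rho(\theta)$,
\begin{equation*}
  \Abs{\frac{\D L^+_{\rho,\eta}}{\D t}}
  =\Abs{v\cdot\int_{B_\eta^c}\nabla_\theta q_\rho(\xi,\theta)\diff{\xi}}
  \le 2\abs{v}\,\Norm{\nabla_\theta\hat{h}_\rho}_{L^2(B_\eta^c)}\,\Norm{\hat{h}_\rho-\hat{f}_\rho}_{L^2(B_\eta^c)}
  \le 2C_1\,\eta^{-m}\sqrt{L_\rho(\theta)}\,\abs{v},
\end{equation*}
whereas $\abs{\D L_\rho/\D t}=\abs{v}^2$. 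Hence, for every $t$ with $\theta(t)\in U$,
\begin{equation*}
  \frac{\abs{\D L^+_{\rho,\eta}/\D t}}{\abs{\D L_\rho/\D t}}
  \le\frac{2C_1\,\eta^{-m}\sqrt{L_\rho(\theta(t))}}{\abs{v(t)}}
  \le\frac{2C_1}{\sqrt{c_{\mathrm{PL}}}}\,\eta^{-m}.
\end{equation*}
Since $\theta(t)\to\theta^*$, there is $T_0\ge0$ with $\theta(t)\in U$ for all $t\ge T_0$; on $(0,T_0]$ apply Theorem~\ref{thm..InitialStage} (valid because $\abs{\nabla_\theta L_\rho(\theta(T_0))}>0$; the case $k=1$ is vacuous). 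Taking $C$ to be the larger of the two constants gives the first inequality for all $t\in(0,\infty)$, and the second follows from $\D L_\rho/\D t=\D L^-_{\rho,\eta}/\D t+\D L^+_{\rho,\eta}/\D t$ together with $\abs{\D L_\rho/\D t}\neq0$.

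\textbf{General loss.} Here the {\L}ojasiewicz bound is replaced by uniform positive-definiteness near $\theta^*$ of the tangent-kernel Gram matrix $G(\theta):=\int_{\R^d}\nabla_\theta h(x,\theta)\,\nabla_\theta h(x,\theta)^{\top}\diff{x}$. Since $\D\hat{h}/\D t=\nabla_\theta\hat{h}\cdot v$, the Plancherel theorem gives $\Norm{\D\hat{h}/\D t}_{L^2(\R^d)}^2=v^{\top}G(\theta)v$, while $\Norm{\D\hat{h}/\D t}_{L^2(B_\eta^c)}\le\abs{v}\,\Norm{\nabla_\theta\hat{h}}_{L^2(B_\eta^c)}\le C_1\,\eta^{-m}\abs{v}$; so it suffices to show $G(\theta)\succeq c\,I$ on a neighbourhood of $\theta^*$. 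To this end: $\tilde{L}_\rho(\theta^*)=0$ with $\ell\ge0$, $\ell\in C^2$ forces $\ell(h(x,\theta^*)-f(x))=\ell'(h(x,\theta^*)-f(x))=0$ for $\mu$-a.e.\ $x$, so $\nabla_\theta^2\tilde{L}_\rho(\theta^*)=\int_{\R^d}\ell''(h(x,\theta^*)-f(x))\,\nabla_\theta h(x,\theta^*)\,\nabla_\theta h(x,\theta^*)^{\top}\rho(x)\diff{x}$; since $\nabla_\theta h(\cdot,\theta^*)$ is supported in the compact set $\mathrm{supp}\,\chi$ on which the factor $\ell''(h(\cdot,\theta^*)-f)$ is bounded, this matrix is $\preceq c'\norm{\rho}_{L^\infty}G(\theta^*)$, and its positive-definiteness (by hypothesis) forces $G(\theta^*)\succ0$. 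Continuity of $\theta\mapsto G(\theta)$ (dominated convergence, by Lemma~\ref{lem..LinfinityIntegrability} and compact support of $\chi$) then gives $G(\theta)\succeq c\,I$ on a neighbourhood $U$ of $\theta^*$. Arguing exactly as in the MSE case---with Theorem~\ref{thm..InitialStage} on $(0,T_0]$---gives the first estimate for all $t\in(0,\infty)$, and the complementary lower bound follows from $\Norm{\D\hat{h}/\D t}_{L^2(\R^d)}^2=\Norm{\D\hat{h}/\D t}_{L^2(B_\eta)}^2+\Norm{\D\hat{h}/\D t}_{L^2(B_\eta^c)}^2$ and $\sqrt{1-x^2}\ge1-x$, after possibly enlarging $C$.

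\textbf{Main obstacle.} The delicate regime is $t\to\infty$, where numerator and denominator of the F-Principle ratio both tend to zero: the crude bound $\abs{\D L^+_{\rho,\eta}/\D t}\le C\eta^{-m}\abs{\nabla_\theta L_\rho}$ sufficient for the initial stage is useless there, since it would force division by a vanishing gradient. The argument is rescued by the interplay of two facts: the \emph{refined} numerator bound carrying the extra factor $\sqrt{L_\rho(\theta)}$---available precisely because $q_\rho$ vanishes at a \emph{global} minimizer---and the quantitative non-degeneracy ($\abs{\nabla_\theta L_\rho}^2\gtrsim L_\rho$, respectively $G\succeq cI$) supplied by the positive-definite Hessian, which exactly absorbs that extra factor. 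I expect the one genuinely delicate point to be the passage, in the general-loss case, from positivity of $\nabla_\theta^2\tilde{L}_\rho(\theta^*)$ to positivity of the \emph{unweighted} Gram matrix $G(\theta^*)$: this is where the truncation by $\chi$ (making $\nabla_\theta h$ compactly supported) and the bound $\rho\in L^\infty$ must be used.
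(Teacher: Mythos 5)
Your proof is correct, and it splits naturally into two cases that should be compared with the paper separately.

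For the MSE loss the argument is essentially the paper's. The paper also passes to the refined numerator bound $\abs{\D L^+_{\rho,\eta}/\D t}\le 2\eta^{-m}C_1\sqrt{L_\rho(\theta)}\abs{\nabla_\theta L_\rho(\theta)}$ and then controls the resulting ratio $\sqrt{L_\rho}/\abs{\nabla_\theta L_\rho}$ by the second-order Taylor expansion of $L_\rho$ and $\nabla_\theta L_\rho$ at $\theta^*$. Your PL inequality $\abs{\nabla_\theta L_\rho}^2\ge c_{\mathrm{PL}}L_\rho$ on a neighbourhood of $\theta^*$ is the same information packaged slightly more robustly (the paper phrases it as a $\lim_{t\to\infty}$, which is really a $\limsup$, but the content is identical). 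You are also a bit more explicit than the paper about covering the compact interval $(0,T_0]$ by invoking Theorem \ref{thm..InitialStage}; the paper leaves this implicit.

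For the general loss your argument is genuinely different. The paper lower-bounds the denominator $\Norm{\D\hat h/\D t}_{L^2(\R^d)}$ via the Cauchy--Schwarz chain $\abs{\nabla_\theta\tilde L_\rho}^2\le\norm{\sqrt\rho}_{L^\infty}\Norm{\D h/\D t}_{L^2}\norm{\ell'(h-f)\sqrt\rho}_{L^2}$, then uses the structural inequality $[\ell'(z)]^2\le C\ell(z)$ from Assumption \ref{assump..GeneralLoss} to reduce to $\sqrt{\tilde L_\rho}/\abs{\nabla_\theta\tilde L_\rho}$ and Taylor-expands $\tilde L_\rho$ at $\theta^*$ exactly as in the MSE case. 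You instead compute the denominator \emph{exactly} as $\sqrt{v^\top G(\theta)v}$ with the tangent-kernel Gram matrix $G$, and then deduce $G(\theta^*)\succ 0$ from $\nabla_\theta^2\tilde L_\rho(\theta^*)\succ 0$ by showing the second-order term is $\preceq c'\norm{\rho}_{L^\infty}G(\theta^*)$ (the first-order contribution dying because $\ell'=0$ $\mu$-a.e.\ at a global minimum of a nonnegative $C^2$ function). This buys a cleaner, more structural lower bound $\Norm{\D\hat h/\D t}_{L^2}\ge\sqrt c\,\abs{v}$ that avoids the $[\ell']^2\le C\ell$ hypothesis entirely (though it does use $\ell''\ge 0$ at the minimum, which also comes from $\ell\ge 0$, $\ell\in C^2$). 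It costs you the extra continuity-of-$G$ argument, which is fine under $k\ge 2$ since $\nabla_\theta h(\cdot,\theta)$ is jointly continuous and compactly supported. Both routes are valid; yours isolates the role of the empirical tangent kernel a little more transparently, the paper's is shorter because it leans directly on Assumption \ref{assump..GeneralLoss}.

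One cosmetic remark: in the paper's proof the denominator estimate carries no factor of $2$ (the $2$ that appears there is a small typo carried over from the $q_\rho$ computation), so your constants match theirs up to inessential factors.
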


\subsection{Discussion and Illustrations}

To help the readers get some intuitions of the above theorems, we present a numerical example using the following target function
\begin{equation*}
    f(x)=\sum_{j=1}^{500}\sin(jx/10)/j.
\end{equation*}

The training data are uniformly sampled from $[-3.14,3.14]$ with
sample size $300$. The discrete Fourier transform of $f(x)$ is shown
in Fig. \ref{fig:highdloss}(a), in which we focus on the peak frequencies
 marked by black squares. First, we use the MSE as the training
loss function.

\textbf{Initial stage in Fig. \ref{fig:highdloss} (b).} The ratio
of the change of the loss function, $\abs{\D L_{\eta}^{+}/\D t}/\abs{\D L/\D t}$
in the upper panel, and the ratio of the change of the DNN output,
$\norm{\D \hat{h}/\D t}_{L^{2}(B_{\eta}^c)}/\norm{\D \hat{h}/\D t}_{L^{2}(\R^{d})}$
in the middle panel, both decreases as frequency increases. At such
initial stage, only the relative error of the first peak frequency,
$\abs{\hat{h}-\hat{f}}/\abs{\hat{f}}$, decreases to a small value. 

\textbf{Intermediate stage in Fig. \ref{fig:highdloss} (c).} The
ratio of the change of the loss function in a certain period, $\abs{L_{\eta}^{+}(\theta(T_1))-L_{\eta}^{+}(\theta(T_2))}/\abs{L_{\eta}(\theta(T_1))-L_{\eta}(\theta(T_2))}$,
increases with $\abs{T_2-T_1}$ for a fixed $\eta$. 

\textbf{Final stage in Fig. \ref{fig:highdloss} (d).} There exists
a frequency $\eta_{0}$ --- when $\eta>\eta_{0},$ the ratio of the
change of the loss function, $\abs{\D L_{\eta}^{+}/\D t}/\abs{\D L/\D t}$
in the upper panel, and the ratio of the change of the DNN output,
$\norm{\D \hat{h}/\D t}_{L^2(B_{\eta}^c)}/\norm{\D \hat{h}/\D t}_{L^2(\R^d)}$
in the middle panel, both decreases as frequency increases. At such
final stage, only peak frequencies corresponding to high frequencies have not converged yet.

Secondly, we use the $L^{4}$ training loss $\frac{1}{M}\sum_{i=1}^M(h(x_i,\theta)-y_i)^4$
as shown in Fig. \ref{fig:highdloss-tanh-L4}. We obtain similar results.
\begin{center}
\begin{figure}
\begin{centering}
\subfloat[Target in Fourier domain]{\begin{centering}
\includegraphics[scale=0.45]{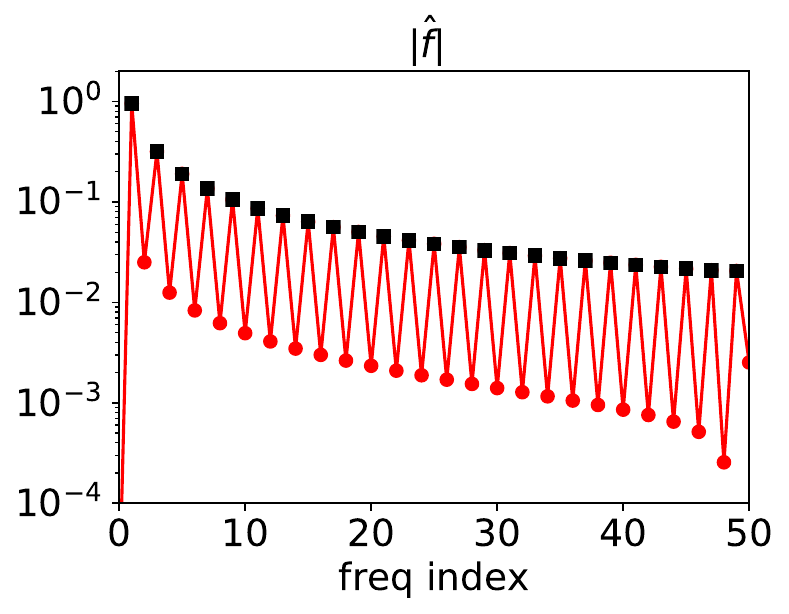}
\par\end{centering}
}\subfloat[Initial stage]{\begin{centering}
\includegraphics[scale=0.45]{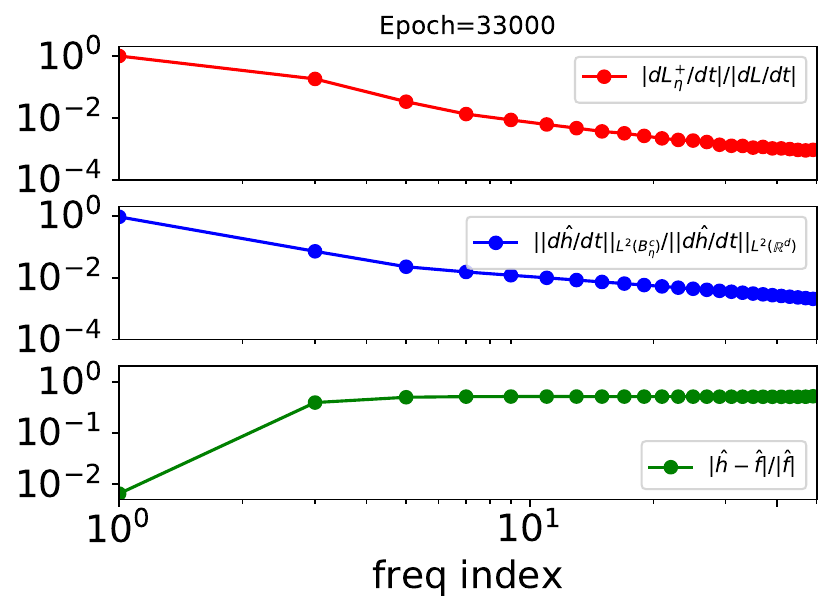}
\par\end{centering}
}
\par\end{centering}
\begin{centering}
\subfloat[Intermediate stage]{\begin{centering}
\includegraphics[scale=0.45]{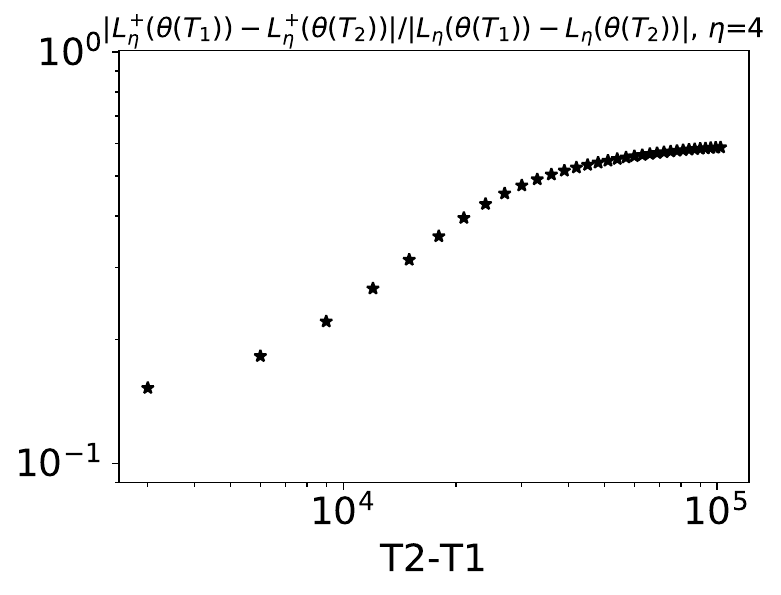}
\par\end{centering}
}\subfloat[Final stage]{\begin{centering}
\includegraphics[scale=0.45]{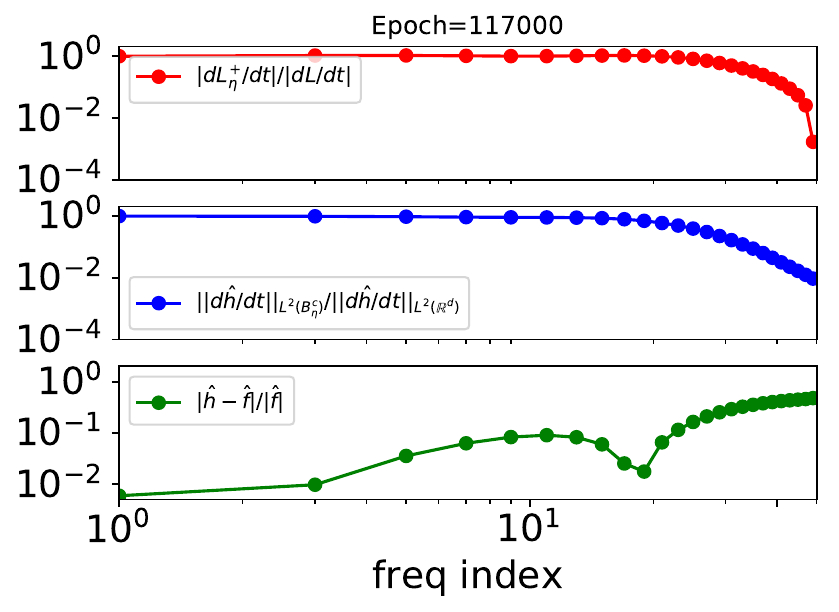}
\par\end{centering}
}
\par\end{centering}
\caption{Numerical understanding of theorems of MSE training loss. (a) Amplitude
of DFT of the training samples against frequency index. Frequencies
marked by black squares are analyzed in the second row. (b, d) upper:
$\abs{\D L_{\protect\geq\eta}/\D t}/\abs{\D L/\D t}$
vs. frequency index. Middle: $\norm{\D \hat{h}/\D t}_{L^{2}(B_{\eta}^{c})}/\norm{\D \hat{h}/\D t}_{L^{2}(\mathbb{R}^{d})}$
vs. frequency index. Lower: Relative error of each selected frequency,
$\abs{\hat{h}-\hat{f}}/\abs{\hat{f}}$ vs. frequency index. Each sub-figure
is plotted at one training epoch. (c) $\abs{L_{\eta}^+(\theta(T_1))-L_{\eta}^+(\theta(T_2))}/\abs{L_{\eta}(\theta(T_1))-L_{\eta}(\theta(T_2))}$
vs. $\abs{T_2-T_1}$ with $\eta$ is selected as the fourth frequency
peak. We use a tanh-DNN with widths 1-200-50-1 with full batch training
by Adam optimizer. The learning rate is $2\times10^{-5}$. \label{fig:highdloss} }
\end{figure}
\par\end{center}

\begin{center}
\begin{figure}
\begin{centering}
\subfloat[Initial stage]{\begin{centering}
\includegraphics[scale=0.3]{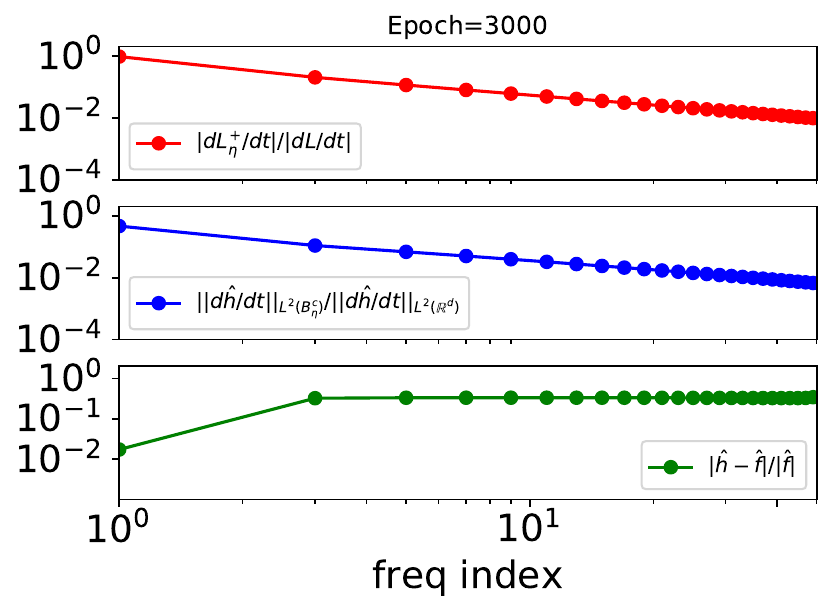}
\par\end{centering}
}\subfloat[Intermediate stage]{\begin{centering}
\includegraphics[scale=0.3]{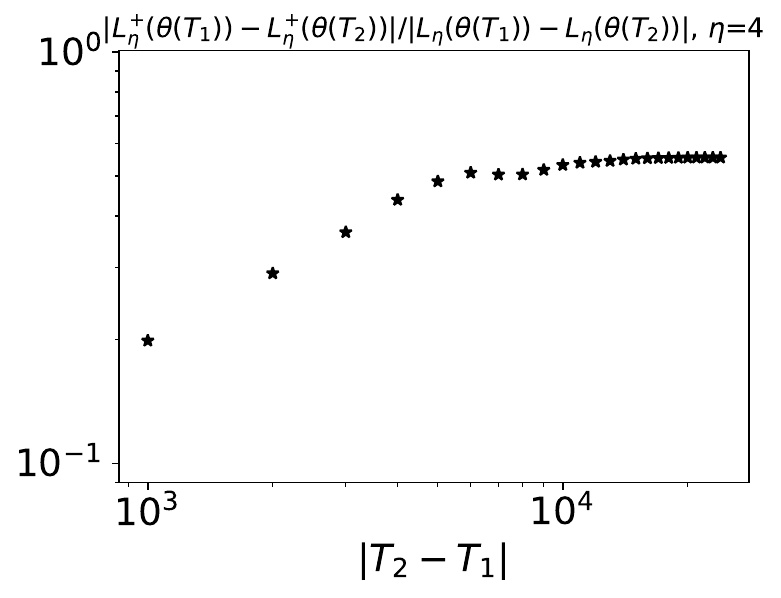}
\par\end{centering}
}\subfloat[Final stage]{\begin{centering}
\includegraphics[scale=0.3]{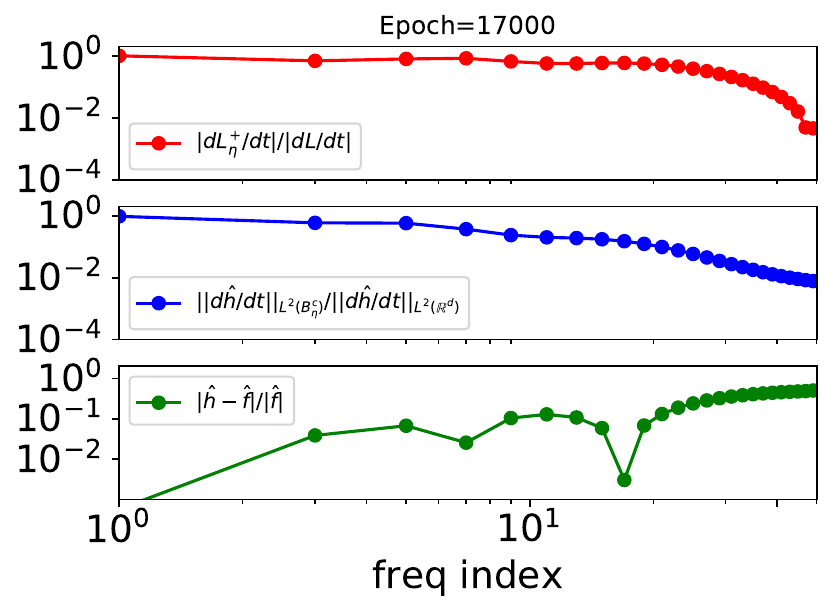}
\par\end{centering}
}
\par\end{centering}
\caption{Numerical understanding of theorems of $L^{4}$ training loss. The
illustrations are same as Fig. \ref{fig:highdloss} (b, c, d), respectively.
We use a tanh-DNN with widths 1-500-500-500-500-1 with full batch
training. \label{fig:highdloss-tanh-L4} }
\end{figure}
\par\end{center}

\section{Proof of Theorems}
\subsection{F-Principle: Initial Stage (Theorem \ref{thm..InitialStage})}

In this section, we focus on the initial stage of the training dynamics. The first result shows that the change of loss function concentrates on low frequencies.

In general, $C$ may depend on $T$. In the next section, we will provide a similar result in some situation where $C$ does not depend on $T$.

\begin{proof}[\textbf{Proof of Theorem \ref{thm..InitialStage} ($L^2$ loss function)}]
    The dynamics for the loss function contributed by high frequency reads as:
    \begin{align} 
        \frac{\D L^+_{\rho,\eta}(\theta)}{\D t}
        &= \left(\int_{B_\eta^c}\nabla_\theta q_\rho(\xi,\theta)\diff{\xi}\right)\cdot\frac{\D \theta}{\D t}\nonumber\\
        &= -\left(\int_{B_\eta^c}\nabla_\theta q_\rho(\xi,\theta)\diff{\xi}\right)\cdot\nabla_\theta L_\rho(\theta).
    \end{align}
    The dynamics for the total loss function is 
    \begin{equation}
        \frac{\D L_\rho(\theta)}{\D t}=-\abs{\nabla_\theta L_\rho(\theta)}^2.
    \end{equation}
    Therefore
    \begin{align}
        \frac{\abs{\D L^+_{\rho,\eta}/\D t}}{\abs{\D L_\rho/\D t}}
        &\leq \frac{\Big(\int_{B_\eta^c}\abs{\nabla_\theta q_\rho(\xi,\theta)}\diff{\xi}\Big)\abs{\nabla_\theta L_\rho(\theta)}}{\abs{\nabla_\theta L_\rho(\theta)}^2}\nonumber\\
        &= \frac{\norm{\nabla_\theta q_\rho(\cdot,\theta)}_{L^1(B_\eta^c)}}{\abs{\nabla_\theta L_\rho(\theta)}}.\label{eq..DecayFiniteTime}
    \end{align}
    Note that $\eta\leq \langle\xi\rangle$ for all $0<\eta\leq \abs{\xi}$. Therefore
    \begin{align}
        \norm{\nabla_\theta q_\rho(\cdot,\theta)}_{L^1(B_\eta^c)}
        &\leq \eta^{-m}\int_{B_\eta^c}\langle\xi\rangle^{m}\abs{\nabla_\theta q_\rho(\xi,\theta)}\diff{\xi}\nonumber\\
        &\leq \eta^{-m}\norm{\langle\cdot\rangle^{m} \abs{\nabla_\theta q_\rho(\cdot,\theta)}}_{L^1(\R^d)}.
    \end{align}
    By Assumption \ref{assump..BoundedTrajectory}, $\sup_{t\geq 0}\abs{\theta(t)}\leq R$ and
    \begin{equation}
      \sup_{t\in(0,T]}\norm{\langle\cdot\rangle^m\nabla_\theta q_\rho(\cdot,\theta)}_{L^1(\R^d)}<+\infty.
    \end{equation}
    For $k=1$, we take the assumption that $\inf_{t\in(0,T]}\abs{\nabla_\theta L_\rho(\theta(t))}>0$.
    For $k\geq 2$, according to Assumption \ref{assump..Differentiability}, we have $\nabla_\theta L_\rho(\cdot)\in W^{1,\infty}_\mathrm{loc}(\R^d;\R^N)$, and hence $\nabla_\theta L_\rho(\theta)$ is locally Lipschitz in $\theta$. This together with Assumption \ref{assump..BoundedTrajectory} implies that $\nabla_\theta L_\rho(\theta(t))$ is continuous on $t\in[0,T]$. If $\inf_{t\in(0,T]}\abs{\nabla_\theta L_\rho(\theta(t))}=0$, then there is a $t_0\in [0,T]$ such that $\abs{\nabla_\theta L_\rho(\theta(t_0))}=0$. By the uniqueness of ordinary differential equation, we have $\abs{\nabla_\theta L_\rho(\theta(T))}=0$ which contridicts with the assumption that $\abs{\nabla_\theta L_\rho(\theta(t))}>0$. Therefore $\inf_{t\in(0,T]}\abs{\nabla_\theta L_\rho(\theta(t))}>0$.
    Thus for $k\geq 1$ the following ratio is bounded from above:
    \begin{equation}
        C:=\sup_{t\in(0,T]} \frac{\norm{\langle\cdot\rangle^{m} \nabla_\theta q_\rho(\cdot,\theta)}_{L^1(\R^d)}}{\abs{\nabla_\theta L_\rho(\theta)}}<+\infty.
    \end{equation}
    Therefore
    \begin{equation}
        \frac{\abs{\D L^+_{\rho,\eta}/\D t}}{\abs{\D L_\rho/\D t}}\leq C\eta^{-m},\quad t\in(0,T].
    \end{equation}
\end{proof}

\begin{cor}[dissipation]
    In the situation of Theorem \ref{thm..InitialStage} for $L^2$ loss function, we have that for sufficiently large $\eta$
    \begin{equation}
        \frac{\D L^-_{\rho,\eta}}{\D t}
        \leq -(1-C\eta^{-m})\abs{\nabla_\theta L_\rho}^2
        \leq 0\label{eq..LowFrequencyLossFunctionDecay}.
    \end{equation}
\end{cor}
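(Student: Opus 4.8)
The plan is to obtain the inequality directly from the additive splitting $L_\rho=L^-_{\rho,\eta}+L^+_{\rho,\eta}$ together with two facts already in hand: the gradient-flow identity $\D L_\rho/\D t=-\abs{\nabla_\theta L_\rho}^2$, and the high-frequency bound $\abs{\D L^+_{\rho,\eta}/\D t}\leq C\eta^{-m}\abs{\D L_\rho/\D t}$ from Theorem \ref{thm..InitialStage} for the $L^2$ loss. No new analytic ingredient is needed; this is a bookkeeping consequence.

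First I would differentiate $L^-_{\rho,\eta}=L_\rho-L^+_{\rho,\eta}$ in $t$, obtaining $\D L^-_{\rho,\eta}/\D t=\D L_\rho/\D t-\D L^+_{\rho,\eta}/\D t=-\abs{\nabla_\theta L_\rho}^2-\D L^+_{\rho,\eta}/\D t$ after inserting the gradient-flow identity. To control the last term I would pass through the absolute value: $\D L^+_{\rho,\eta}/\D t\leq\abs{\D L^+_{\rho,\eta}/\D t}\leq C\eta^{-m}\abs{\D L_\rho/\D t}=C\eta^{-m}\abs{\nabla_\theta L_\rho}^2$, where the middle step is exactly the conclusion of Theorem \ref{thm..InitialStage} and the final equality is again the gradient-flow identity. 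Combining these gives $\D L^-_{\rho,\eta}/\D t\leq-\abs{\nabla_\theta L_\rho}^2+C\eta^{-m}\abs{\nabla_\theta L_\rho}^2=-(1-C\eta^{-m})\abs{\nabla_\theta L_\rho}^2$, which is the first claimed inequality, valid for $t\in(0,T]$ with the same hypotheses as the theorem.

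For the second inequality, since $m\geq1>0$ we may choose $\eta$ large enough that $C\eta^{-m}\leq1$; then the prefactor $1-C\eta^{-m}$ is nonnegative and the right-hand side is $\leq0$. I do not expect any genuine obstacle here: the only two points meriting a word of care are that Theorem \ref{thm..InitialStage} bounds $\abs{\D L^+_{\rho,\eta}/\D t}$, so one must use $x\leq\abs{x}$ to turn it into the signed upper bound needed above, and that the phrase ``for sufficiently large $\eta$'' is precisely the threshold $C\eta^{-m}\leq1$ that makes the sign conclusion valid; all remaining hypotheses (in particular $\abs{\nabla_\theta L_\rho(\theta(T))}>0$, and the extra condition when $k=1$) are inherited verbatim from Theorem \ref{thm..InitialStage}.
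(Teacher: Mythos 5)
Your approach is exactly the paper's: split $L_\rho=L^-_{\rho,\eta}+L^+_{\rho,\eta}$, differentiate, insert the gradient-flow identity, and control the high-frequency term via Theorem \ref{thm..InitialStage}. The one thing to fix is a sign slip in the middle step. After writing
\begin{equation*}
\frac{\D L^-_{\rho,\eta}}{\D t}=-\abs{\nabla_\theta L_\rho}^2-\frac{\D L^+_{\rho,\eta}}{\D t},
\end{equation*}
the term you must bound \emph{from above} is $-\D L^+_{\rho,\eta}/\D t$, not $\D L^+_{\rho,\eta}/\D t$. Your chain
\begin{equation*}
\frac{\D L^+_{\rho,\eta}}{\D t}\leq\Abs{\frac{\D L^+_{\rho,\eta}}{\D t}}\leq C\eta^{-m}\abs{\nabla_\theta L_\rho}^2
\end{equation*}
upper-bounds the wrong quantity: substituting an upper bound for $\D L^+_{\rho,\eta}/\D t$ into $-\abs{\nabla_\theta L_\rho}^2-\D L^+_{\rho,\eta}/\D t$ produces a \emph{lower} bound on $\D L^-_{\rho,\eta}/\D t$. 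What you want is the other half of the absolute-value inequality, $-\D L^+_{\rho,\eta}/\D t\leq\Abs{\D L^+_{\rho,\eta}/\D t}\leq C\eta^{-m}\abs{\nabla_\theta L_\rho}^2$; with that replacement the combination gives the claimed $\D L^-_{\rho,\eta}/\D t\leq-(1-C\eta^{-m})\abs{\nabla_\theta L_\rho}^2$. The rest of your argument, including the observation that ``sufficiently large $\eta$'' is precisely $C\eta^{-m}\leq1$ and that the remaining hypotheses are inherited from Theorem \ref{thm..InitialStage}, is correct and matches the paper.
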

\begin{proof}
    For sufficiently large $\eta$, the dynamics of $L^-_{\rho,\eta}$ is dissipative because
    \begin{equation*}
        \frac{\D L^-_{\rho,\eta}(\theta)}{\D t}
        = \frac{\D L_\rho(\theta)}{\D t}-\frac{\D L^+_{\rho,\eta}(\theta)}{\D t}
        \leq -(1-C\eta^{-m})\abs{\nabla_\theta L_\rho(\theta)}^2
        \leq 0.
    \end{equation*}
\end{proof}

Next we prove the case of general loss function.

\begin{proof}[\textbf{Proof of Theorem \ref{thm..InitialStage} (general loss function)}]
    On the one hand, we estimate the numerator by studying the dynamics for $\hat{h}$:
    \begin{equation}
        \frac{\D \hat{h}(\xi,\theta)}{\D t}
        =\nabla_\theta \hat{h}(\xi,\theta)\cdot\frac{\D \theta}{\D t}
        =-\nabla_\theta \hat{h}(\xi,\theta)\cdot\nabla_\theta \tilde{L}_\rho(\theta).\label{eq..dynamicsFtildeh}
    \end{equation}
    Taking square and integrating both sides on $B_\eta^c$ leads to the upper bound on the numerator
    \begin{equation}
        \Norm{\frac{\D \hat{h}(\cdot,\theta)}{\D t}}_{L^2(B_\eta^c)}
        \leq \abs{\nabla_\theta \tilde{L}_\rho(\theta)} \norm{\nabla_\theta \hat{h}(\cdot,\theta)}_{L^2(B_\eta^c)}. \label{eq..hhatGeneralLossNumeratorEstimate}
    \end{equation}
    On the other hand, note the dynamics for the hypothesis function
    \begin{equation}
        \frac{\D h(x,\theta)}{\D t}=\nabla_\theta h(x,\theta)\cdot \frac{\D \theta}{\D t}=-\nabla_\theta \tilde{L}_\rho(\theta)\cdot \nabla_\theta h(x,\theta)
    \end{equation}
    and the dynamics for the total loss function
    \begin{equation}
        \frac{\D \tilde{L}_\rho(\theta)}{\D t}=-\abs{\nabla_\theta\tilde{L}_\rho(\theta)}^2.
    \end{equation}
    Thus we have 
    \begin{align}
        \abs{\nabla_\theta \tilde{L}_\rho(\theta)}^2
        &=\Abs{\frac{\D \tilde{L}_\rho(\theta)}{\D t}}\nonumber\\
        &=\Abs{\frac{\D}{\D t}\int_{\R^d}\ell(h(x,\theta)-f(x))\rho(x)\diff{x}}\nonumber\\
        &=\Abs{\int_{\R^d} \frac{\D h(x,\theta)}{\D t}\ell'(h(x,\theta)-f(x))\rho(x)\diff{x}}\nonumber\\
        &\leq \norm{\sqrt{\rho}}_{L^\infty}\Norm{\frac{\D h(\cdot,\theta)}{\D t}}_{L^2(\R^d)}\norm{\ell'(h(\cdot,\theta)-f(\cdot))\sqrt{\rho(\cdot)}}_{L^2(\R^d)}, \label{eq..hhatGeneralLossDenominatorEstimate}
    \end{align}
    where we used the Cauchy--Schwarz inequality in the last step. Combining Eqs. \eqref{eq..hhatGeneralLossNumeratorEstimate} and \eqref{eq..hhatGeneralLossDenominatorEstimate}, we obtain
    \begin{align}
        \frac{\norm{\frac{\D \hat{h}}{\D t}}_{L^2(B_\eta^c)}}
        {\norm{\frac{\D \hat{h}}{\D t}}_{L^2(\R^d)}}
        &\leq \frac{\norm{\sqrt{\rho}}_{L^\infty}\norm{\ell'(h(\cdot,\theta)-f(\cdot))\sqrt{\rho(\cdot)}}_{L^2(\R^d)}
        \abs{\nabla_\theta \tilde{L}_\rho(\theta)}\norm{\nabla_\theta \hat{h}(\cdot,\theta)}_{L^2(B_\eta^c)}
        }{\abs{\nabla_\theta \tilde{L}_\rho(\theta)}^2}\nonumber\\
        &\leq \norm{\sqrt{\rho}}_{L^\infty}\norm{\nabla_\theta \hat{h}(\cdot,\theta)}_{L^2(B_\eta^c)}\frac{\norm{\ell'(h(\cdot,\theta)-f(\cdot))\sqrt{\rho(\cdot)}}_{L^2(\R^d)}}{\abs{\nabla_\theta \tilde{L}_\rho(\theta)}}.
    \end{align}
        Similar to the case of $L^2$ loss,
    \begin{align}
        \norm{\nabla_\theta \hat{h}(\cdot,\theta)}_{L^2(B_\eta^c)}
        &\leq \eta^{-m} \left(\int_{B_\eta^c}\langle\xi\rangle^{2m}\abs{\nabla_\theta \hat{h}(\xi,\theta)}^2\diff{\xi}\right)^{1/2}\nonumber\\
        &\leq \eta^{-m} \norm{\langle\cdot\rangle^{m} \nabla_\theta \hat{h}(\cdot,\theta)}_{L^2(\R^d)}.
    \end{align}
    Again, by Assumption \ref{assump..BoundedTrajectory}, $\sup_{t\geq 0}\abs{\theta(t)}\leq R$ and 
    \begin{equation}
        \sup_{t\in(0,T]}\norm{\langle\cdot\rangle^{m} \nabla_\theta \hat{h}(\cdot,\theta)}_{L^2(\R^d)}<+\infty.
    \end{equation}
    For $k\geq 2$, the same argument of the $L^2$ loss case leads to $\inf_{t\in(0,T]}\abs{\nabla_\theta \tilde{L}_\rho(\theta(t))}>0$.
    The proof is completed by the following bound
    \begin{align}
        \sup_{t\in(0,T]}\frac{\norm{\ell'(h(\cdot,\theta)-f(\cdot))\sqrt{\rho(\cdot)}}_{L^2(\R^d)}}{\abs{\nabla_\theta \tilde{L}_\rho(\theta)}}
        &\leq \sup_{t\in(0,T]}\frac{\left(C\int_{\R^d}\ell(h(x,\theta)-f(x))\rho(x)\diff{x}\right)^{1/2}}{\abs{\nabla_\theta\tilde{L}_\rho(\theta)}}\nonumber\\
        &< +\infty, \nonumber
    \end{align}
    where we used Assumption \ref{assump..GeneralLoss}.
\end{proof}

\subsection{F-Principle: Intermediate Stage (Theorem \ref{thm..IntermediateStage})} 

In this section, we prove the key theorem for the intermediate stage. This theorem then implies several useful corollaries.

\begin{proof}[\textbf{Proof of Theorem \ref{thm..IntermediateStage}}]
    The numerator can be controlled as follows
    \begin{align}
        &~~\int_{T_1}^{T_2}\Abs{\frac{\D L_{\eta}^+(\theta)}{\D t}}\diff{t}\nonumber\\
        &= \int_{T_1}^{T_2}\Abs{\left(\int_{B_\eta^c}\nabla_\theta q(\xi,\theta(t))\diff{\xi}\right)\cdot\frac{\D \theta(t)}{\D t}}\diff{t}\nonumber\\
        &\leq \int_{T_1}^{T_2}\left( \int_{B_\eta^c}\abs{\nabla_\theta q(\xi,\theta(t))}\diff{\xi}\right) \abs{\nabla_\theta \tilde{L}_\rho(\theta(t))}\diff{t}\nonumber\\
        &=\int_{T_1}^{T_2}\abs{\nabla_\theta \tilde{L}_\rho(\theta(t))}\int_{B_\eta^c}\Abs{\nabla_\theta \hat{h}(\xi,\theta(t))\overline{\hat{h}(\xi,\theta(t))-\hat{f}(\xi)}+\cc}\diff{\xi}\diff{t}\nonumber\\
        &\leq 2\int_{T_1}^{T_2}\abs{\nabla_\theta \tilde{L}_\rho(\theta(t))}\norm{\nabla_\theta\hat{h}(\cdot,\theta(t))}_{L^2(B_\eta^c)}\norm{h(\cdot,\theta(t))-f(\cdot)}_{L^2(\R^d)}\diff{t}\nonumber\\
        &\leq 2\eta^{-m}\left(\sup_{t\in[T_1,T_2]}\norm{\langle\cdot\rangle^{m}\nabla_\theta\hat{h}(\cdot,\theta(t))}_{L^2(\R^d)}\right)\int_{T_1}^{T_2}\abs{\nabla_\theta \tilde{L}_\rho(\theta(t))}L(\theta(t))^{1/2}\diff{t},
    \end{align}
    where in the second-to-last step we used the Cauchy--Schwarz inequality and the Plancherel theorem, and in the last step we used the following
    \begin{equation}
        \norm{\nabla_\theta\hat{h}(\cdot,\theta(t))}_{L^2(B_\eta^c)}\leq \eta^{-m}\norm{\langle\cdot\rangle^{m}\nabla_\theta\hat{h}(\cdot,\theta(t))}_{L^2(\R^d)}.
    \end{equation}
    By Assumption \ref{assump..BoundedTrajectory}, $\sup_{t\geq 0}\abs{\theta(t)}\leq R$ and
    \begin{equation}
        C_1:=\sup_{t\geq 0}\norm{\langle\cdot\rangle^{m}\nabla_\theta\hat{h}(\cdot,\theta(t))}_{L^2(\R^d)}<\infty.
    \end{equation}
    By the assumption that $\frac{1}{2}L(\theta(T_1))\geq L(\theta(T_2))$
    , we have 
    \begin{align}
        \int_{T_1}^{T_2}\Abs{\frac{\D L}{\D t}}\diff{t}
        \geq \abs{L(\theta(T_1))-L(\theta(T_2))}
        \geq \frac{1}{2}L(\theta(T_1)).
    \end{align}
    Therefore,
    \begin{align}
        \frac{\int_{T_1}^{T_2}\Abs{\frac{\D L_{\eta}^+}{\D t}}\diff{t}}{\int_{T_1}^{T_2}\Abs{\frac{\D L}{\D t}}\diff{t}}
        &\leq \frac{2C_1\eta^{-m}\int_{T_1}^{T_2}\abs{\nabla_\theta \tilde{L}_\rho(\theta(t))}L(\theta(t))^{1/2}\diff{t}}{\abs{\frac{1}{2}L(\theta(T_1))}^{1/2}(\int_{T_1}^{T_2}\Abs{\frac{\D L(\theta(t))}{\D t}}\diff{t})^{1/2}}\nonumber\\
        &\leq  
        \frac{2\sqrt{2}C_1\eta^{-m}(\int_{T_1}^{T_2}\abs{\nabla_\theta\tilde{L}_\rho(\theta(t))}^2\diff{t})^{1/2}}{\abs{L(\theta(T_1))}^{1/2}}\times\frac{(\int_{T_1}^{T_2}L(\theta(t))\diff{t})^{1/2}}{(\int_{T_1}^{T_2}\Abs{\frac{\D L(\theta(t))}{\D t}}\diff{t})^{1/2}}.
    \end{align}
    Recall the training dynamics
    \begin{equation}
      \frac{\D \theta}{\D t}
      = -\nabla_\theta \tilde{L}_\rho(\theta),
    \end{equation}
    where for $k\geq 2$, $\nabla_\theta\tilde{L}_\rho(\cdot)\in W^{1,\infty}_\mathrm{loc}(\R^N)\subset C^{0,1}(\R^N)$. Hence $\theta(\cdot)\in C^{0,1}([0,+\infty))$.
    Taking further time derivative of $\theta$, we obtain
    \begin{equation}
      \frac{\D^2 \theta}{\D t^2}
      = -\nabla^2_\theta\tilde{L}_\rho(\theta)\cdot\frac{\D \theta}{\D t}
      =\nabla^2_\theta\tilde{L}_\rho(\theta)\cdot\nabla_\theta\tilde{L}_\rho(\theta).
    \end{equation}
    Since $\nabla_\theta\tilde{L}_\rho(\cdot), \nabla^2_\theta\tilde{L}_\rho(\cdot)\in L^\infty_\mathrm{loc}(\R^N)$ and $\theta(\cdot)$ is continuous, we have $\frac{\D^2 \theta(\cdot)}{\D t^2}\in L^{\infty}_\mathrm{loc}([0,+\infty))$. Taking time derivatives of the $L^2$ loss function $L$, we obtain
    \begin{align}
      \frac{\D L}{\D t}
      &= \nabla_\theta L\cdot\frac{\D \theta}{\D t},\\
      \frac{\D^2 L}{\D t^2}
      &= (\nabla^2_\theta L\cdot\frac{\D \theta}{\D t})\cdot \frac{\D \theta}{\D t}+\nabla_\theta L\cdot\frac{\D^2 \theta}{\D t^2}.
    \end{align}
    The facts that $\nabla^2_\theta L(\cdot), \nabla_\theta L(\cdot)\in L^\infty_\mathrm{loc}(\R^N)$ and that $\theta(\cdot)$ is continuous lead to $\nabla^2_\theta L(\theta(\cdot)), \nabla_\theta L(\theta(\cdot))\in L^\infty_\mathrm{loc}([0,+\infty))$. This with $\frac{\D \theta(\cdot)}{\D t},\frac{\D^2 \theta(\cdot)}{\D t^2}\in L^{\infty}_\mathrm{loc}([0,+\infty))$ implies that $\frac{\D^2 L(\theta(\cdot))}{\D t^2}\in L^{\infty}_\mathrm{loc}([0,+\infty))$. 
    Therefore $\frac{\D L(\theta(\cdot))}{\D t}\in C^{0,1}([0,+\infty))$. 
    Thus $M:=\max_{t\in[T_1,T_2]}L(\theta(t))$ is finite. If $M\leq 2 L(\theta(T_1))$, we have
    \begin{align}
      \frac{\int_{T_1}^{T_2}L(\theta)\diff{t}}{\int_{T_1}^{T_2}\Abs{\frac{\D L(\theta)}{\D t}}\diff{t}}
      &\leq \frac{(T_2-T_1)M}{\abs{L(\theta(T_1))-L(\theta(T_2))}}\nonumber\\
      &\leq \frac{(T_2-T_1)2L(\theta(T_1))}{\frac{1}{2}L(\theta(T_1))}\nonumber\\
      &=4(T_2-T_1).\label{eq..ratioM1}
    \end{align}
    If $M>2L(\theta(T_1))$, then we choose $t_{M}\in[T_1,T_2]$ such that $L(\theta(t_M))=M$. We have
    \begin{align}
      \frac{\int_{T_1}^{T_2}L(\theta)\diff{t}}{\int_{T_1}^{T_2}\Abs{\frac{\D L(\theta)}{\D t}}\diff{t}}
      &\leq \frac{(T_2-T_1)M}{\abs{L(\theta(T_1))-L(
      \theta(t_M))}+\abs{L(\theta(t_M))-L(\theta(T_2))}}\nonumber\\
      &= \frac{(T_2-T_1)M}{M-L(\theta(T_1))+M-L(\theta(T_2))}\nonumber\\
      &\leq \frac{(T_2-T_1)M}{2M-\frac{3}{2}L(\theta(T_1))}\nonumber\\
      &\leq \frac{4}{5}(T_2-T_1).\label{eq..ratioM2}
    \end{align}
    Combining Eqs. \eqref{eq..ratioM1} and \eqref{eq..ratioM2}, we have
    \begin{equation}
      \frac{\int_{T_1}^{T_2}L(\theta)\diff{t}}{\int_{T_1}^{T_2}\Abs{\frac{\D L(\theta)}{\D t}}\diff{t}}
      \leq 4(T_2-T_1).
    \end{equation}
    Therefore
    \begin{align}
        \frac{\int_{T_1}^{T_2}\Abs{\frac{\D L_{\eta}^+}{\D t}}\diff{t}}{\int_{T_1}^{T_2}\Abs{\frac{\D L}{\D t}}\diff{t}}
        &\leq \frac{2\sqrt{8}C_1\eta^{-m}\sqrt{T_2-T_1}\abs{\tilde{L}_\rho(\theta(T_1))-\tilde{L}_\rho(\theta(T_2))}^{1/2}}{\abs{L(\theta(T_1))}^{1/2}}\nonumber\\
        &\leq 4\sqrt{2}C_1\eta^{-m}\sqrt{T_2-T_1}\frac{\abs{\tilde{L}_\rho(\theta(T_1))}^{1/2}}{\abs{L(\theta(T_1))}^{1/2}}\nonumber\\
        &= C\sqrt{T_2-T_1}\eta^{-m},
    \end{align}
    where $C=4\sqrt{2}C_1C_2^{1/2}$ and 
    \begin{equation}
        C_2:=\sup_{t\geq 0}\frac{\abs{\tilde{L}_\rho(\theta(t))}}{\abs{L(\theta(t))}}.
    \end{equation}
    Now it is sufficient to show that $C_2<+\infty$. In fact, there is a constant $C_3$ such that $\sup_{\abs{\theta}\leq R}\sup_{x\in\R^d}\abs{h(x,\theta)-f(x)}\leq C_3$. This with Assumption \ref{assump..GeneralLoss} implies that $\ell(z)\leq C_4\abs{z}^2$ for $\abs{z}\leq C_3$. Therefore
    \begin{align}
        C_2
        &\leq \sup_{t\geq 0}\frac{\abs{\int_{\R^d}\ell(h(x,\theta(t))-h(x,\theta^*))\rho(x)\diff{x}}}{\abs{\int_{\R^d}(h(x,\theta(t))-h(x,\theta^*))^2\diff{x}}}\nonumber\\
        &\leq \norm{\rho}_{L^\infty}
        \sup_{t\geq 0}\frac{\abs{\int_{\R^d}C_4(h(x,\theta(t))-h(x,\theta^*))^2\diff{x}}}{\abs{\int_{\R^d}(h(x,\theta(t))-h(x,\theta^*))^2\diff{x}}}<+\infty.
    \end{align}
\end{proof}
\begin{rmk}
  If the condition $\frac{1}{2}L(\theta(T_1))\geq L(\theta(T_2))$ is replaced by $\delta L(\theta(T_1))\geq L(\theta(T_2))$ for any $\delta\in(0,1)$, the estimates in Theorem \ref{thm..IntermediateStage} and the following corollaries still hold. 
\end{rmk}
\begin{cor}\label{cor..DifferenceRatioGeneralLoss}
    Under the same assumptions in Theorem \ref{thm..IntermediateStage}, for any $1\leq m\leq k-1$, 
    there is a constant $C>0$ such that for any $0<T_1<T_2$ satisfying $\frac{1}{2}L(\theta(T_1))\geq L(\theta(T_2))$ and $L(\theta(T_1))\geq L(\theta(t))$ for all $t\in[T_1,T_2]$, we have
    \begin{equation}
      \frac{\abs{L_{\eta}^+(\theta(T_1))-L_{\eta}^+(\theta(T_2))}}{\abs{L(\theta(T_1))-L(\theta(T_2))}}\leq C\sqrt{T_2-T_1}\eta^{-m}.\label{eq..SqrtTGeneralLossDifferenceRatio}
    \end{equation}
\end{cor}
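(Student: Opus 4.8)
The plan is to re-use verbatim the estimates already assembled in the proof of Theorem~\ref{thm..IntermediateStage} and to exploit the extra hypothesis $L(\theta(T_1))\geq L(\theta(t))$ for $t\in[T_1,T_2]$, which is precisely what upgrades the time-averaged ratio \eqref{eq..SqrtTGeneralLoss} to the finite-difference ratio claimed here. First, since $L_{\eta}^{+}(\theta(\cdot))$ is absolutely continuous on $[T_1,T_2]$, I would bound the numerator by a total variation and then re-run the Cauchy--Schwarz/Plancherel chain from the proof of Theorem~\ref{thm..IntermediateStage}:
\begin{equation*}
  \abs{L_{\eta}^{+}(\theta(T_1))-L_{\eta}^{+}(\theta(T_2))}
  \leq \int_{T_1}^{T_2}\Abs{\frac{\D L_{\eta}^{+}}{\D t}}\diff{t}
  \leq 2C_1\eta^{-m}\int_{T_1}^{T_2}\abs{\nabla_\theta\tilde{L}_\rho(\theta(t))}\,L(\theta(t))^{1/2}\diff{t},
\end{equation*}
with $C_1=\sup_{t\geq 0}\norm{\langle\cdot\rangle^{m}\nabla_\theta\hat{h}(\cdot,\theta(t))}_{L^2(\R^d)}<\infty$ by Assumption~\ref{assump..BoundedTrajectory}.

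Next I would insert the new hypothesis. Since $L(\theta(t))\leq L(\theta(T_1))$ on $[T_1,T_2]$, the factor $L(\theta(t))^{1/2}$ comes out of the integral, and then Cauchy--Schwarz in $t$ together with the dissipation identity $\frac{\D\tilde{L}_\rho}{\D t}=-\abs{\nabla_\theta\tilde{L}_\rho}^2$ yields
\begin{equation*}
  \int_{T_1}^{T_2}\abs{\nabla_\theta\tilde{L}_\rho(\theta(t))}\,L(\theta(t))^{1/2}\diff{t}
  \leq L(\theta(T_1))^{1/2}\sqrt{T_2-T_1}\,\big(\tilde{L}_\rho(\theta(T_1))-\tilde{L}_\rho(\theta(T_2))\big)^{1/2}
  \leq L(\theta(T_1))^{1/2}\sqrt{T_2-T_1}\,\tilde{L}_\rho(\theta(T_1))^{1/2}.
\end{equation*}
Then I would invoke the comparability $\tilde{L}_\rho(\theta(t))\leq C_2\,L(\theta(t))$, with $C_2:=\sup_{t\geq 0}\tilde{L}_\rho(\theta(t))/L(\theta(t))<\infty$, which is exactly the fact established at the end of the proof of Theorem~\ref{thm..IntermediateStage} from Assumption~\ref{assump..GeneralLoss} and the boundedness of the trajectory; this replaces $\tilde{L}_\rho(\theta(T_1))^{1/2}$ by $C_2^{1/2}L(\theta(T_1))^{1/2}$. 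Collecting constants, the numerator is at most $2C_1C_2^{1/2}\eta^{-m}\sqrt{T_2-T_1}\,L(\theta(T_1))$.

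Finally, for the denominator the half-life hypothesis $\tfrac12 L(\theta(T_1))\geq L(\theta(T_2))$ gives $\abs{L(\theta(T_1))-L(\theta(T_2))}=L(\theta(T_1))-L(\theta(T_2))\geq\tfrac12 L(\theta(T_1))$; dividing, the factor $L(\theta(T_1))$ cancels and \eqref{eq..SqrtTGeneralLossDifferenceRatio} follows (with, e.g., $C=4C_1C_2^{1/2}$). I do not anticipate a genuine obstacle, as this is essentially a bookkeeping consequence of Theorem~\ref{thm..IntermediateStage}; the only delicate point is recognizing where the monotonicity hypothesis $L(\theta(T_1))\geq L(\theta(t))$ is essential --- without it, $\int_{T_1}^{T_2}\abs{\D L/\D t}\diff{t}$ can be far larger than $\abs{L(\theta(T_1))-L(\theta(T_2))}$ because $L(\theta(t))$ may oscillate, so one is thrown back onto the weaker time-integral statement of the theorem and the case split on $M=\max_{t\in[T_1,T_2]}L(\theta(t))$ cannot be avoided.
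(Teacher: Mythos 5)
Your proof is correct and follows essentially the same route as the paper's: bound the numerator by the total variation $\int_{T_1}^{T_2}\abs{\D L_\eta^+/\D t}\diff t$, reuse the numerator estimate from Theorem~\ref{thm..IntermediateStage}, pull out $L(\theta(T_1))^{1/2}$ using the monotonicity hypothesis, apply Cauchy--Schwarz plus the dissipation identity to extract $\sqrt{T_2-T_1}\,\tilde L_\rho(\theta(T_1))^{1/2}$, convert via $C_2$, and divide by the half-life lower bound $\tfrac12 L(\theta(T_1))$; the paper's proof is the same chain, just stated more tersely with a ``same reason as Theorem~\ref{thm..IntermediateStage}'' at the end. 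Your closing remark about why the monotonicity hypothesis is essential (avoiding the $M=\max L$ case split) is also an accurate reading of the mechanism.
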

\begin{proof}
  Similar to the proof of Theorem \ref{thm..IntermediateStage}, we have the upper bound for the numerator
  \begin{equation}
    \abs{L_{\eta}^+(\theta(T_1))-L_{\eta}^+(\theta(T_2))}\leq 2\eta^{-m} C_1\int_{T_1}^{T_2}\abs{\nabla_\theta \tilde{L}_\rho(\theta(t))}L(\theta(t))^{1/2}\diff{t}
  \end{equation}
  and lower bound for the denominator
  \begin{equation}
    \abs{L(\theta(T_1))-L(\theta(T_2))}\geq \frac{1}{2}L(\theta(T_1))
  \end{equation}
  with $C_1:=\sup_{t\in[T_1,T_2]}\norm{\langle\cdot\rangle^{m}\nabla_\theta\hat{h}(\cdot,\theta(t))}_{L^2(\R^d)}<+\infty$.
  Therefore, these bounds with the assumption that $L(\theta(T_1))\geq L(\theta(t))$ for all $t\in[T_1, T_2]$ leads to
  \begin{align}
      \frac{\abs{L_{\eta}^+(\theta(T_1))-L_{\eta}^+(\theta(T_2))}}{\abs{L(\theta(T_1))-L(\theta(T_2))}}
      &\leq \frac{2C_1\eta^{-m}\int_{T_1}^{T_2}\abs{\nabla_\theta \tilde{L}_\rho(\theta(t))}L(\theta(t))^{1/2}\diff{t}}{\frac{1}{2}L(\theta(T_1))}\nonumber\\
      &\leq  
      \frac{4C_1\eta^{-m}\int_{T_1}^{T_2}\abs{\nabla_\theta\tilde{L}_\rho(\theta(t))}\diff{t}}{\abs{L(\theta(T_1))}^{1/2}}\nonumber\\
      &\leq C\sqrt{T_2-T_1}\eta^{-m},
  \end{align}
  where the last inequality is due to the same reason as Theorem \ref{thm..IntermediateStage}.
\end{proof}
\begin{cor}
    Under the same assumptions in Theorem \ref{thm..IntermediateStage}, if the solution $\theta$ converges to a non-degenerate global minimizer $\theta^*$, then for any $1\leq m\leq k-1$, the above upper bound can be improved to the following: there is a constant $C>0$ such that for any $T>0$, we have
    \begin{equation}
        \frac{\int_{0}^{T}\Abs{\frac{\D L_{\eta}^+}{\D t}}\diff{t}}{\int_{0}^{T}\Abs{\frac{\D L}{\D t}}\diff{t}}\leq C\eta^{-m}
    \end{equation}
    and
    \begin{equation}
        \frac{\abs{L_{\eta}^+(\theta(0))-L_{\eta}^+(\theta(T))}}{\abs{L(\theta(0))-L(\theta(T))}}
        \leq C\eta^{-m}.
    \end{equation}
\end{cor}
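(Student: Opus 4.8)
The plan is to keep the numerator estimate from the proof of Theorem~\ref{thm..IntermediateStage} and to replace the factor $\sqrt{T_2-T_1}$ — which only entered through the crude bound $\int_{T_1}^{T_2}L(\theta(t))\,\D t\leq (T_2-T_1)\max_{[T_1,T_2]}L(\theta)$ — by a $T$-independent quantity, the extra input being that convergence to a non-degenerate global minimizer makes the flow decay exponentially. Concretely, with $u=\theta-\theta^*$: since $\nabla_\theta\tilde L_\rho(\theta^*)=0$ and the Hessian $H=\nabla^2_\theta\tilde L_\rho(\theta^*)$ exists and is positive definite with least eigenvalue $\mu>0$, one has $\nabla_\theta\tilde L_\rho(\theta^*+u)=Hu+o(\abs u)$, hence $\tfrac{\D}{\D t}\abs u^2=-2u^\top\nabla_\theta\tilde L_\rho(\theta^*+u)\leq-2\lambda\abs u^2$ for any fixed $\lambda\in(0,\mu)$ once $\abs u$ is small, and as $u(t)\to0$ this gives $\abs{\theta(t)-\theta^*}\leq C\E^{-\lambda t}$ for all $t\geq0$. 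Together with $\tilde L_\rho(\theta^*)=0$ (which forces $h(\cdot,\theta^*)=f$, hence $L(\theta^*)=0$), $\nabla_\theta\tilde L_\rho(\theta^*)=\nabla_\theta L(\theta^*)=0$, and the local boundedness of $\nabla^2_\theta\tilde L_\rho$ and $\nabla^2_\theta L$ established in the proof of Theorem~\ref{thm..IntermediateStage}, a second-order Taylor estimate yields $\tilde L_\rho(\theta(t)),L(\theta(t))\leq C\E^{-2\lambda t}$ and $\abs{\nabla_\theta\tilde L_\rho(\theta(t))}\leq C\E^{-\lambda t}$, so that $\int_0^\infty\abs{\nabla_\theta\tilde L_\rho(\theta(t))}\,L(\theta(t))^{1/2}\,\D t<\infty$.

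Next I would rerun the chain of inequalities from the proof of Theorem~\ref{thm..IntermediateStage} (Cauchy--Schwarz, Plancherel, the bound $\norm{\nabla_\theta\hat h}_{L^2(B_\eta^c)}\leq\eta^{-m}\norm{\langle\cdot\rangle^m\nabla_\theta\hat h}_{L^2(\R^d)}$, and $C_1:=\sup_{t\geq0}\norm{\langle\cdot\rangle^m\nabla_\theta\hat h(\cdot,\theta(t))}_{L^2(\R^d)}<\infty$ by Assumption~\ref{assump..BoundedTrajectory} and Lemma~\ref{lem..L2Integrability}), but now on the full interval $[0,T]$, to obtain the $T$-uniform bound
\begin{equation}
\int_0^T\Abs{\frac{\D L_\eta^+}{\D t}}\,\D t\leq 2C_1\eta^{-m}\int_0^T\abs{\nabla_\theta\tilde L_\rho(\theta(t))}\,L(\theta(t))^{1/2}\,\D t\leq A\,\eta^{-m},
\end{equation}
with $A$ independent of $T$ and $\eta$; the same estimate controls $\abs{L_\eta^+(\theta(0))-L_\eta^+(\theta(T))}=\Abs{\int_0^T(\D L_\eta^+/\D t)\,\D t}\leq A\eta^{-m}$.

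For the denominators I would use $\int_0^T\abs{\D L/\D t}\,\D t\geq\abs{L(\theta(0))-L(\theta(T))}$. Here $L(\theta(0))>0$ — otherwise $h(\cdot,\theta(0))=f$, so $\theta(0)$ is already a global minimizer and the trajectory is constant, contradicting the nontriviality in Assumption~\ref{assump..BoundedTrajectory} — while $L(\theta(T))\to L(\theta^*)=0$; hence there is $T^*>0$ with $\int_0^T\abs{\D L/\D t}\,\D t\geq\tfrac12 L(\theta(0))$ for all $T\geq T^*$, and with the numerator bound this gives both claimed inequalities on $[T^*,\infty)$ with $C=2A/L(\theta(0))$. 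On $0<T<T^*$ the curve $\theta$ lies in a compact set on which $\nabla_\theta\tilde L_\rho$ does not vanish, so the pointwise estimate $\abs{\D L_\eta^+/\D t}\leq\eta^{-m}C_6\abs{\nabla_\theta\tilde L_\rho}$ holds with $C_6:=\sup_{t\geq0}\norm{\langle\cdot\rangle^m\nabla_\theta q(\cdot,\theta(t))}_{L^1(\R^d)}<\infty$ (Lemma~\ref{lem..L2Integrability}); since $T\mapsto\int_0^T\abs{\D L_\eta^+/\D t}\,\D t\,/\,\int_0^T\abs{\D L/\D t}\,\D t$ is continuous on $(0,T^*]$ and has a finite limit as $T\downarrow0$, carrying the explicit $\eta^{-m}$ through bounds it by $C'\eta^{-m}$ there, with $C'$ depending only on $\theta(0)$; alternatively, for those $T<T^*$ with $L(\theta(T))\leq\tfrac12 L(\theta(0))$ one may invoke Theorem~\ref{thm..IntermediateStage} (resp.\ Corollary~\ref{cor..DifferenceRatioGeneralLoss}) with $T_1\downarrow0$ and $T_2=T$, which gives $C\sqrt{T^*}\eta^{-m}$. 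Taking $C$ to be the largest of these constants finishes the proof; the second inequality is handled identically, its numerator being $\leq A\eta^{-m}$ and its denominator $\geq\tfrac12 L(\theta(0))$ for $T\geq T^*$.

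I expect the short window $0<T<T^*$ to be the only delicate point: outside the regime where the half-life hypothesis of Theorem~\ref{thm..IntermediateStage} is available, $\int_0^T\abs{\D L/\D t}\,\D t\to0$ as $T\downarrow0$, so one must verify that the numerator — which already carries the favorable $\eta^{-m}$ — vanishes at least as fast, which amounts to $L\circ\theta$ not being stationary at $t=0$ (equivalently $L(\theta(T))\not\equiv L(\theta(0))$ near $T=0$). For the genuine final stage, i.e. large $T$, the clean estimate on $[T^*,\infty)$ is the real content and is exactly what makes this corollary consistent with the final-stage results.
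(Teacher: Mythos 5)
Your proposal is substantially more detailed than the paper's, which simply asserts that the corollary ``can be obtained directly from Theorem~\ref{thm..FinalStage}'' and skips the proof. Your route is genuinely different: you establish exponential convergence of $\theta(t)$ to $\theta^*$ from the positive-definiteness of the Hessian, use that to show $\int_0^\infty \abs{\nabla_\theta\tilde L_\rho}\,L^{1/2}\,\D t<\infty$, and hence get a $T$-uniform numerator bound $A\eta^{-m}$, which, combined with $\int_0^T\abs{\D L/\D t}\,\D t\geq\abs{L(\theta(0))-L(\theta(T))}\to L(\theta(0))>0$, yields the estimate for $T$ bounded away from zero. This is a clean and correct derivation of the $T\geq T^*$ part, and it is in fact more explicit than what the paper offers: note that Theorem~\ref{thm..FinalStage} in the $L^2$-loss case crucially uses $\D L_\rho/\D t=-\abs{\nabla_\theta L_\rho}^2$, which is unavailable here because the flow is driven by $\tilde L_\rho$ while the observable is $L$; and Theorem~\ref{thm..FinalStage} in the general-loss case bounds the ratio of $\norm{\D\hat h/\D t}_{L^2}$ norms, not the ratio of $L_\eta^+$ and $L$ increments, so the paper's ``directly'' is already a nontrivial step that your argument supplies.

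That said, the short-time window $0<T<T^*$ is a genuine gap, and you correctly identify but do not close it. The problem is not merely that $\int_0^T\abs{\D L/\D t}\,\D t\to 0$ as $T\downarrow 0$ (the numerator does too); it is that the vanishing rates may differ. Along the flow $\dot\theta=-\nabla_\theta\tilde L_\rho$ one has $\D L/\D t=-\nabla_\theta L\cdot\nabla_\theta\tilde L_\rho$ and $\D L_\eta^+/\D t=-\nabla_\theta L_\eta^+\cdot\nabla_\theta\tilde L_\rho$. Nothing in Assumptions~\ref{assump..Differentiability}--\ref{assump..GeneralLoss} rules out $\nabla_\theta L(\theta_0)\perp\nabla_\theta\tilde L_\rho(\theta_0)$ with $\nabla_\theta L_\eta^+(\theta_0)\not\perp\nabla_\theta\tilde L_\rho(\theta_0)$, in which case the denominator is $O(T^2)$ while the numerator is $\Theta(T)$, and the ratio blows up as $T\downarrow 0$ for the fixed $\eta$ in question. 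Your fallback of invoking Theorem~\ref{thm..IntermediateStage} (or Corollary~\ref{cor..DifferenceRatioGeneralLoss}) with $T_1\downarrow 0$, $T_2=T$ does not help for small $T$ either, because the half-life hypothesis $\tfrac12 L(\theta(T_1))\geq L(\theta(T_2))$ fails there. So your proof, and indeed the corollary as stated with ``for any $T>0$'', requires an extra hypothesis such as $\D L/\D t\big|_{t=0}\neq 0$, or the statement should be weakened to $T\geq T^*$ for some $T^*>0$; you flag exactly this tension in your final paragraph but leave it unresolved. One further small point: $\tilde L_\rho(\theta^*)=0$ forces $h(\cdot,\theta^*)=f$ only $\rho$-a.e., so $L(\theta^*)=0$ implicitly requires $\rho>0$ on a large enough set; the paper's own proof of Theorem~\ref{thm..IntermediateStage} makes the same tacit identification when bounding $C_2$, so this is a shared, not a new, defect.
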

We skip the proof since this corollary can be obtained directly from Theorem \ref{thm..FinalStage}.

\subsection{F-Principle: Final Stage (Theorem \ref{thm..FinalStage})}
In this section, we prove the F-Principle in final stage of the training dynamics. 

\begin{proof}[\textbf{Proof of Theorem \ref{thm..FinalStage} ($L^2$ loss function)}]
    Following \eqref{eq..DecayFiniteTime}, we have
  \begin{align}
    \frac{\abs{\D L^+_{\rho,\eta}/\D t}}{\abs{\D L_\rho/\D t}}
    &\leq \frac{\int_{B_\eta^c}\Abs{\nabla_\theta \hat{h}_\rho(\xi,\theta)\overline{\hat{h}_\rho(\xi,\theta)-\hat{f}_\rho(\xi)}+\cc}\diff{\xi}}{\abs{\nabla_\theta L_\rho(\theta)}}\nonumber\\
    &\leq \frac{2\norm{\nabla_\theta \hat{h}_\rho(\cdot,\theta)}_{L^2(B_\eta^c)}\norm{\hat{h}_\rho(\cdot,\theta)-\hat{f}_\rho(\cdot)}_{L^2(\R^d)}}{\abs{\nabla_\theta L_\rho(\theta)}},\nonumber\\
    &= 2\norm{\nabla_\theta \hat{h}_\rho(\cdot,\theta)}_{L^2(B_\eta^c)}\frac{\abs{L_\rho(\theta)}^{1/2}}{\abs{\nabla_\theta L_\rho(\theta)}},
  \end{align}
  where we used $\norm{\hat{h}_\rho(\cdot,\theta)-\hat{f}_\rho(\cdot)}_{L^2(\R^d)}^2= L_\rho(\theta)$ in the last inequality. Similar to the local-in-time situation, 
  \begin{align}
    \norm{\nabla_\theta \hat{h}_\rho(\cdot,\theta)}_{L^2(B_\eta^c)}
    &\leq \eta^{-m} \left(\int_{B_\eta^c}\langle\xi\rangle^{2m}\abs{\nabla_\theta \hat{h}_\rho(\xi,\theta)}^2\diff{\xi}\right)^{1/2}\nonumber\\
    &\leq \eta^{-m} \norm{\langle\cdot\rangle^{m} \nabla_\theta\hat{h}_\rho(\cdot,\theta)}_{L^2(\R^d)}.
  \end{align}
  By Assumption \ref{assump..BoundedTrajectory}, $\sup_{t\geq 0}\abs{\theta(t)}\leq R$ and
  \begin{equation}
    \sup_{t\in(0,+\infty)}\norm{\langle\cdot\rangle^{m} \nabla_\theta\hat{h}_\rho(\cdot,\theta(t))}_{L^2(\R^d)}<+\infty.
  \end{equation}
  Now it is sufficient to prove that 
  \begin{equation}
    C:=\lim_{t\to+\infty}\frac{\abs{L_\rho(\theta)}^{1/2}}{\abs{\nabla_\theta L_\rho(\theta)}}<+\infty.
  \end{equation}
  This is true because 
  \begin{align}
    C
    &= \lim_{\theta\to\theta^*}\frac{\abs{L_\rho(\theta)}^{1/2}}{\abs{\nabla_\theta L_\rho(\theta)}}\nonumber\\
    &= \lim_{\theta\to\theta^*}\frac{\abs{(\theta-\theta^*)^T\Lambda(\theta-\theta^*)+o(\abs{\theta-\theta^*}^2)}^{1/2}}{\abs{2\Lambda(\theta-\theta^*)}+o(\abs{\theta-\theta^*})}\nonumber\\
    &<+\infty,
  \end{align}
  where we used the assumption that the minimizer is non-degenerate with the Hessian $\Lambda=\nabla_\theta^2 L_\rho(\theta^*)$.
\end{proof}

Now we finish the proof for general loss function.

\begin{proof}[\textbf{Proof of Theorem \ref{thm..FinalStage} (general loss function)}]
    By the proof of Theorem \ref{thm..InitialStage}, we have
    \begin{align}
        \frac{\norm{\frac{\D \hat{h}}{\D t}}_{L^2(B_\eta^c)}}
        {\norm{\frac{\D \hat{h}}{\D t}}_{L^2(\R^d)}}
        &\leq 2\norm{\sqrt{\rho}}_{L^\infty}\norm{\nabla_\theta \hat{h}(\cdot,\theta)}_{L^2(B_\eta^c)}\frac{\norm{\ell'(h(\cdot,\theta)-f(\cdot))\sqrt{\rho(\cdot)}}_{L^2(\R^d)}}{\abs{\nabla_\theta \tilde{L}_\rho(\theta)}}
    \end{align}
    and
    \begin{align}
        \norm{\nabla_\theta \hat{h}(\cdot,\theta)}_{L^2(B_\eta^c)}
        &\leq \eta^{-m} \norm{\langle\cdot\rangle^{m} \nabla_\theta \hat{h}(\cdot,\theta)}_{L^2(\R^d)}.
    \end{align}
    Since $\lim_{t\to+\infty}\theta(t)=\theta^*$, we have
    \begin{equation}
        \sup_{t\in(0,+\infty)}\norm{\langle\cdot\rangle^{m} \nabla_\theta \hat{h}(\cdot,\theta)}_{L^2(\R^d)}<+\infty.
    \end{equation}
    Now it is sufficient to prove that 
    \begin{equation}
        \sup_{t\in(0,\infty]}\frac{\norm{\ell'(h(\cdot,\theta)-f(\cdot))\sqrt{\rho(\cdot)}}_{L^2(\R^d)}}{\abs{\nabla_\theta \tilde{L}_\rho(\theta)}}<+\infty.
    \end{equation}
    This is true because 
    \begin{align}
        \lim_{t\to+\infty}\frac{\norm{\ell'(h(\cdot,\theta)-f(\cdot))\sqrt{\rho(\cdot)}}_{L^2(\R^d)}}{\abs{\nabla_\theta \tilde{L}_\rho(\theta)}}
        &= \lim_{t\to+\infty}\frac{\left(\int_{\R^d}[\ell'(h(x,\theta)-f(x))]^2\rho(x)\diff{x}\right)^{1/2}}{\abs{\nabla_\theta\tilde{L}_\rho(\theta)}}\nonumber\\
        &\leq \lim_{t\to+\infty}\frac{\left(C\int_{\R^d}\ell(h(x,\theta)-f(x))\rho(x)\diff{x}\right)^{1/2}}{\abs{\nabla_\theta\tilde{L}_\rho(\theta)}}\nonumber\\
        &= \lim_{t\to+\infty}\frac{C^{1/2} \abs{\tilde{L}_\rho(\theta)}^{1/2}}{\abs{\nabla_\theta\tilde{L}_\rho(\theta)}}\nonumber\\
        &= \lim_{\theta\to\theta^*}\frac{\abs{(\theta-\theta^*)^T\tilde{\Lambda}(\theta-\theta^*)+o(\abs{\theta-\theta^*}^2)}^{1/2}}{\abs{2\tilde{\Lambda}(\theta-\theta^*)}+o(\abs{\theta-\theta^*})}\nonumber\\
        &<+\infty,
    \end{align}
    where we used Assumption \ref{assump..GeneralLoss} and the assumption that the minimizer is non-degenerate with the Hessian $\tilde{\Lambda}=\nabla_\theta^2 \tilde{L}_\rho(\theta^*)$.
\end{proof}

\bibliographystyle{icml2019}
\bibliography{DLRef}
\end{document}